\newcommand{\REDONE}[1]{{#1}}
\newcommand{\CR}[1]{{#1}}
\def \a \alpha
\def \zero {\mathbf{0}}
\def \labelingf{f}
\def \Ps{P_{\textrm{s}}} 
\def \Pt{P_{\textrm{t}}} 
\def \psdensity{p_{\textrm{s}}} 
\def \ptdensity{p_{\textrm{t}}} 
\def \hypot{\mathcal{H}}
\def \domainin{\mathcal{X}}
\def \domainout{\mathcal{Y}}
\def \sourcedataset{\textrm{S}}
\def \targetdataset{\textrm{T}}
\def \latentspace{\mathcal{Z}}
\def \brmap{\textrm{BR}}
\def  \pcost{J}
\def \hypotf{h}
\def \fhHdiscrepancy{\textrm{D}^{\phi}_{h,\hypot}}
\def \dst{d_{st}}
\def \hessiangame{H(\omega)}
\def \vectorfield{v(\omega)}
\def \be {\begin{eqnarray}}
\def \en {\end{eqnarray}}
\def\eqref#1{equation~\ref{#1}}
\def\Eqref#1{Equation~(\ref{#1})}
\def\1{\bm{1}}
\DeclareMathAlphabet{\mathsfit}{\encodingdefault}{\sfdefault}{m}{sl}
\SetMathAlphabet{\mathsfit}{bold}{\encodingdefault}{\sfdefault}{bx}{n}
\newcommand{\E}{\mathbb{E}}
\newcommand{\R}{\mathbb{R}}
\newcommand{\sigmoid}{\sigma}
\DeclareMathOperator*{\argmin}{arg\,min}
\newtheorem{thm}{Theorem}  
\newtheorem{definition}{Definition}
\newtheorem{prop}{Proposition}
\newtheorem{eg}{Example}
\newtheorem{lem}{Lemma}
\newtheorem{cor}{Corollary}
\title{Domain Adversarial Training\\ A Game  Perspective}
\author{David Acuna$^{123}$,  Marc T. Law$^2$, Guojun Zhang$^{34}$, Sanja Fidler$^{123}$ \\ ~\\
$^1$University of Toronto~ $^2$NVIDIA~  $^3$Vector Institute~ $^4$University of Waterloo
}
\begin{document}

\maketitle

\begin{abstract}
The dominant line of work in domain adaptation has focused on learning invariant representations using domain-adversarial training.
In this paper, we interpret this approach from a \textit{game theoretical perspective}.
Defining optimal solutions 
in domain-adversarial training 
as local Nash equilibria, 
we show that gradient descent in domain-adversarial training  
can violate the asymptotic convergence guarantees of the optimizer, 
oftentimes hindering the transfer performance.
Our analysis leads us to replace gradient descent with
high-order 
ODE solvers
(i.e.,~Runge--Kutta), for which we derive asymptotic convergence  guarantees. 
This family of optimizers is significantly more stable
and allows more aggressive learning rates, leading to high performance gains when used as a drop-in replacement over standard optimizers. 
Our experiments show that in conjunction with state-of-the-art domain-adversarial methods, we achieve up to 3.5\% improvement
with 
less than half of training iterations. 
Our optimizers are 
 easy to implement, free of additional parameters, and can be  plugged into any domain-adversarial framework.
\end{abstract}

\vspace{-2.0mm}
\section{Introduction}
\vspace{-0.5mm}

%
Unsupervised domain adaptation (UDA) deals with the lack of labeled data in a target domain by transferring knowledge from a labeled source domain (i.e.,  a related dataset with different distribution where abundant labeled data already exists). The paramount importance of this paradigm has led to remarkable advances in the field in terms of both theory and algorithms \citep{ben2007analysis,ben2010theory,david2010impossibility,mansour2009domain}. 
Several \textit{state-of-the-art} algorithms 
tackle UDA by learning domain-invariant representations in an adversarial fashion~\citep{shu2018dirt, long2018conditional,saito2018maximum,hoffman2018cycada,zhang19bridging,fdomainadversarial}. 
Their goal is to fool an auxiliary classifier that operates in a representation space and aims to classify whether the datapoint belongs to either the source or the target domain.
This idea, called \textit{Domain-Adversarial Learning} (DAL), was introduced by~\citet{ganin2016domain} 
and can be more formally understood as minimizing the discrepancy between  source and target domain in a  representation space~\citep{fdomainadversarial}.
\REDONE{
Despite DAL being a dominant approach for UDA, 
alternative solutions have been sought as
%
DAL is  noticeably unstable and difficult to train in practice \citep{NIPS2016_b59c67bf,sun2019unsupervised,chang2019domain}.
%
%
%
%
%
One major cause of instability is the adversarial nature of the learning algorithm 
which results from the introduction of the \textit{Gradient Reversal Layer} \citep[GRL,][]{ganin2016domain} (\Cref{fig:architecture_and_grl}). 
GRL flips the sign of the gradient during the backward pass, which 
has profound implications on the training dynamics and asymptotic behavior of the learning algorithm.
Indeed, 
GRL transforms gradient descent into a competitive gradient-based algorithm which may converge to periodic orbits and other non-trivial limiting behavior that arise for instance in chaotic systems~\citep{mazumdar2020gradient}. 
Surprisingly, little attention has been paid to this fact, and specifically to  
the adversarial component and interaction among the three different networks in the algorithm.
In particular, three fundamental questions have not been answered from an algorithmic point of view, \textit{1) What is optimality in DAL?}  \textit{2) What makes DAL difficult to train and \textit{3)} How can we mitigate this problem?
}
%
%
%

\REDONE{
In this work, we aim to answer these questions by interpreting the DAL framework through the lens of game theory. 
Specifically, we use tools developed by the game theoretical community in \cite{dynamicgametheorybook,JMLR:v20:19-008,mazumdar2020gradient}
 and draw inspiration from the existing  two-player zero-sum game interpretations of Generative Adversarial Networks (GANs) \citep{goodfellow2014generative}.} 
\REDONE{We emphasize that in DAL, however, we have three rather than two networks interacting with each other, with partial cooperation and competition.  
We propose a natural three-player game interpretation for DAL, which is not necessarily equivalent to two-player zero-sum game interpretations (see \Cref{ex_three_players_vs_two}), which we coin as the Domain-Adversarial Game. We also propose to interpret and characterize optimal solutions in DAL as local Nash Equilibria (see \Cref{sec.game.perspective.on.dal}).
This characterization 
introduces 
a proper mathematical definition of algorithmic optimality for DAL. It also provides sufficient conditions for optimality that drives the algorithmic analysis. 
} 
%
%

\REDONE{With our proposed game perspective in mind, 
a simple optimization solution would be to use the Gradient Descent (GD) algorithm, which is the \emph{de facto} solution but known to be unstable.
Alternatively,  we could also use other popular gradient based optimizers  proposed in the context of differentiable games \citep[e.g.][]{korpelevich1976extragradient,mescheder2017numerics}. However, we notice that these do not outperform GD in practice (see \S~\ref{sec.exp.results}). 
%
To understand why,
we analyze the asymptotic behavior of gradient-based algorithms in the proposed domain-adversarial game (\S~\ref{sec:learning_algorithm}). 
The main result of \S~\ref{sub.sec.analisys.gd.grl} (\Cref{eq:high_res_gda}) shows 
 that GD with GRL (i.e., the existing solution for DAL) violates the asymptotic convergence guarantees to local NE unless an upper bound
is placed on the learning rate, 
%
which may explain its training instability and sensitivity to optimizer parameters.
In  \S~\ref{subsec.high.order.solvers}, \Cref{suppl_dynamics_exg} and \Cref{supp.additional.experiments}, we also provide a similar analysis for the popular game optimization algorithms mentioned above.
We emphasize however that while some of our results may be of independent interest for learning in general games, \textit{our focus is DAL}.}}
 \S~\ref{subsec.high.order.solvers} and \S~\ref{sec.exp.results}
show both theoretically and experimentally that the limitations mentioned above disappear  if standard optimizers are replaced with ODE solvers of at least second order.
These are straightforward to implement as drop-in replacements to existing optimizers. They also lead to more stable algorithms, allow for more aggressive learning rates and provide notable  performance gains.
\vspace{-3mm}
\section{Preliminaries}\label{sec.preliminaries}
\vspace{-0.2em}

	\begin{wrapfigure}{r}[0.cm]{0.30\textwidth}
	\vspace{-1.6 cm}
\begin{minipage}{0.285\textwidth}

    %
    \centering
\includegraphics[trim=100 45 180 75, clip,width=0.84\linewidth]{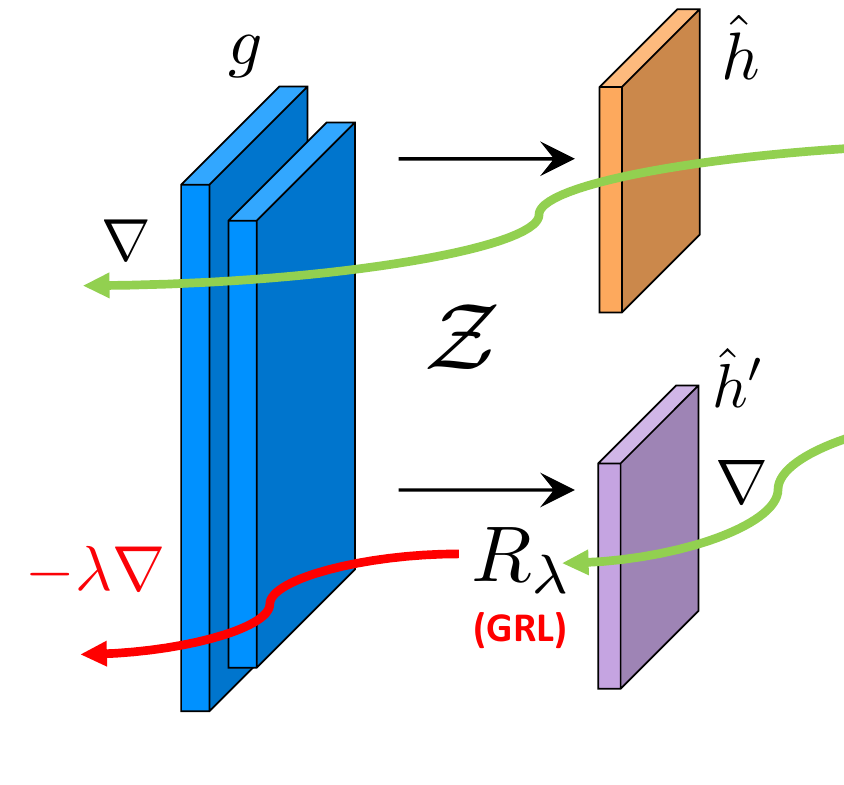}
\vspace{-5mm}
\caption{
    \small We study domain-adversarial training 
     from a game perspective.
    In DAL (\cite{ganin2016domain}), three networks interact with each other: the feature extractor ($g$), the domain classifier ($\hat h'$) and the classifier ($\hat h$).
    During backpropagation, the GRL \emph{flips} the sign of the gradient with respect to $g$.
    }
    \label{fig:architecture_and_grl}
\end{minipage}
\vspace{-0.6cm}
\end{wrapfigure}

We focus on the UDA scenario and follow the formulation from~\citet{fdomainadversarial}. 
This makes our analysis general and applicable to most state-of-the-art DAL algorithms (e.g., \cite{ganin2016domain,saito2018maximum,zhang19bridging}).
We assume that the learner has access to a source dataset ($\sourcedataset$) 
with 
\emph{labeled} examples 
 and a target dataset ($\targetdataset$) with 
\emph{unlabeled} examples, where the source inputs $x^s_i$ are sampled i.i.d.~from a (source) distribution $\Ps$ 
and the target inputs $x^t_i$ are sampled i.i.d.~from a (target) distribution $\Pt$, both over $\domainin$. 
We have $\domainout = \{0, 1\}$ for binary classification,
and $\domainout = \{1, . . . , k\}$ in the multiclass  case. 
The risk of a hypothesis $\hypotf:\domainin \to \domainout $ w.r.t. the labeling function $\labelingf$, using a loss function $\ell:\domainout \times \domainout \to \R_+$ under distribution $\mathcal{D}$ is defined as: $R^{\ell}_\mathcal{D}(h,\labelingf):=\E_{\mathcal{D}} [\ell(h(x),\labelingf(x))]$. 
For simplicity, we define $R^{\ell}_S(h):=R^{\ell}_{P_s}(h,\labelingf_s)$ and $R^{\ell}_T(h):=R^{\ell}_{P_t}(h,\labelingf_t)$.
The hypothesis class of $h$ is denoted by $\hypot$.

\textbf{UDA} aims to minimize the risk in the target domain 
while only having access to labeled data in the source domain.
This risk  is upper bounded in terms of the risk of the source domain, the discrepancy between the two distributions and the joint hypothesis error $\lambda^*$:

\begin{thm}{(\citet{fdomainadversarial})}\label{thm:general_bound}
Let us note $\ell: \domainout \times \domainout \to [0, 1]$,  
 $\lambda^*:=\min_{h \in \hypot} R^{\ell}_S( h) +  R^{\ell}_T( h),$ and $\fhHdiscrepancy(\Ps || \Pt):= \sup_{ h' \in \hypot  } | \E_{x\sim \Ps} [\ell(h(x),h'(x))] - \E_{x\sim \Pt}[\phi^{*}(\ell(h(x),h'(x))) |.$ 
We have:
\begin{equation} \label{eq:risk_bound}
R^{\ell}_T(h) \leq  R^{\ell}_S( h) +  \fhHdiscrepancy(\Ps || \Pt) +\lambda^*. 
\end{equation}
\end{thm}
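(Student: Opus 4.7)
The plan is to follow the classical Ben-David style decomposition and adapt it to the $f$-divergence discrepancy. First I introduce the joint minimizer $h^* \in \arg\min_{h \in \hypot} R^{\ell}_S(h) + R^{\ell}_T(h)$, so that by definition $R^{\ell}_S(h^*) + R^{\ell}_T(h^*) = \lambda^*$. Next I apply the triangle inequality for the loss $\ell$ (which I assume holds, as in the standard domain-adaptation literature) twice: once on the target side to get
\begin{equation*}
\riskTlh = R^{\ell}_T(h, f_t) \leq R^{\ell}_T(h, h^*) + R^{\ell}_T(h^*, f_t) = \riskTlhh + R^{\ell}_T(h^*),
\end{equation*}
and once on the source side to get $\riskSlhh \leq \riskSlh + R^{\ell}_S(h^*)$. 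If I can control the cross term $\riskTlhh$ in terms of $\riskSlhh$ plus $\fhHdiscrepancy(\Ps\|\Pt)$, then chaining these inequalities immediately yields $\riskTlh \leq \riskSlh + \fhHdiscrepancy(\Ps\|\Pt) + R^{\ell}_S(h^*) + R^{\ell}_T(h^*) = \riskSlh + \fhHdiscrepancy(\Ps\|\Pt) + \lambda^*$.

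The main work is therefore to prove the key intermediate bound
\begin{equation*}
\riskTlhh \leq \riskSlhh + \fhHdiscrepancy(\Ps\|\Pt).
\end{equation*}
My strategy is to insert the Fenchel conjugate $\phi^*$ on the target side. Because $\phi$ is the convex generator of a valid $f$-divergence (so $\phi(1)=0$), Fenchel--Young applied at $x=1$ gives $t \leq \phi^*(t) + \phi(1) = \phi^*(t)$ for every $t$ in the effective range; since $\ell$ takes values in $[0,1]$, this yields pointwise $\ell(h(x),h^*(x)) \leq \phi^*(\ell(h(x),h^*(x)))$, hence $\riskTlhh \leq \E_{x\sim\Pt}[\phi^*(\ell(h(x),h^*(x)))]$. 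Plugging $h' = h^*$ into the supremum defining $\fhHdiscrepancy$ then gives
\begin{equation*}
\E_{x\sim\Pt}[\phi^*(\ell(h(x),h^*(x)))] - \E_{x\sim\Ps}[\ell(h(x),h^*(x))] \leq \fhHdiscrepancy(\Ps\|\Pt),
\end{equation*}
which rearranges to exactly the desired cross-term bound.

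I expect the main obstacle to be the $\phi^*(t) \geq t$ step, because it is the only place where properties of the $f$-divergence generator enter substantively; it has to be justified carefully from the standing assumptions on $\phi$ (convex, lower semicontinuous, with $\phi(1)=0$) and from the fact that $\ell$ is bounded in $[0,1]$ so the relevant $t$ lie in the effective domain of $\phi^*$. Everything else is a mechanical combination of two triangle inequalities and the definitions of $\lambda^*$ and $\fhHdiscrepancy$, so once that single inequality is nailed down the theorem falls out by direct substitution.
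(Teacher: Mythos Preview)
The paper does not prove this theorem; it is quoted as a preliminary result from the cited reference \citet{fdomainadversarial}, so there is no in-paper proof to compare against. Your argument is correct and is in fact the proof given in that original reference: the Ben-David--style triangle-inequality decomposition through the ideal joint hypothesis $h^*$, together with the Fenchel--Young inequality at $x=1$ (using $\phi(1)=0$) to obtain $t\le\phi^*(t)$ and hence pass from the plain target loss to the $\phi^*$-weighted term appearing in $\fhHdiscrepancy$. The only implicit hypotheses you invoke---that $\ell$ is symmetric and obeys the triangle inequality, and that an $h^*\in\hypot$ attaining $\lambda^*$ exists so that it can be plugged in as $h'$ in the supremum---are exactly the standing assumptions made in the cited work, so nothing is missing.
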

The function  $\phi:\R_{+} \to \R$ 
defines a particular $f$-divergence and $\phi^{*}$ is its (Fenchel) conjugate.
As is typical in UDA, we assume that the hypothesis class is complex enough and both $f_s$ and $f_t$ are similar 
in such a way that the non-estimable term ($\lambda^*$) is negligible and can be ignored. 
\textbf{Domain-Adversarial Training} (see \Cref{fig:architecture_and_grl})
aims to find  a hypothesis $h \in \hypot$ that jointly minimizes the first two terms of \Cref{thm:general_bound}.
To this end, the 
hypothesis $h$ is interpreted as the composition of  $h=\hat h \circ g$ with $g: \domainin \to \latentspace$, and $\hat h: \latentspace \to \domainout$. 
Another function class $\hat \hypot$ is then defined to formulate $\hypot:=\{\hat h \circ g  : \hat h \in \hat \hypot,  g \in \mathcal{G} \}$.
The algorithm 
tries to find the function $g \in  \mathcal{G}$ such that $\hat h \circ g$ minimizes the risk of the source domain (i.e.~the first term in Theorem~\ref{thm:general_bound}), and its composition with $\hat h$ and $\hat h'$ minimizes the divergence of the two distributions (i.e.~the second term in Theorem~\ref{thm:general_bound}).

Algorithmically, the computation of the divergence function 
in Theorem \ref{thm:general_bound} is estimated  by a so-called \textit{domain classifier} $\hat h' \in \hat \hypot$ whose role is to detect whether the datapoint $g(x_i) \in \latentspace$ belongs to the source or to the target domain. When there does not exist a function $\hat h'  \in \hat \hypot$ that can properly distinguish between $g(x_i^s)$ and $g(x_i^t)$,  $g$ is said to be invariant to the domains.  

Learning is performed using GD and the  GRL (denoted by $R_\lambda$) on the following objective:
\begin{equation}\label{eqn:min_max_traditional}
\min_{\hat h \in \hat \hypot, \\ g \in \mathcal{G}, \\ \hat h' \in \hat \hypot } \E_{x \sim \psdensity } [\ell(\hat h\circ g ,y)] - \alpha d_{s,t}(\hat h, \hat h',R_\lambda (g)),
\end{equation} 
where $d_{s,t}(\hat h, \hat h',g) :=  \E_{x \sim \psdensity } [\hat \ell(\hat h' \circ g ,\hat h\circ g )] \nonumber - \E_{x  \sim \ptdensity } [(\phi^* \circ \hat \ell)( \hat h'\circ g ,\hat h\circ g) ]$.  
%
Mathematically, the GRL $R_\lambda$ is treated as a “pseudo-function” defined by two (incompatible) equations describing its forward and back-propagation behavior \citep{ganin2015unsupervised,ganin2016domain}. 
Specifically, 
\begin{equation}
R_\lambda (x):=x \quad \text{ and }  \quad  dR_\lambda (x)/dx:=-\lambda ,
\end{equation}
where $\lambda$ and $\alpha$ are hyper-parameters that control the tradeoff between achieving small source error and learning an invariant representation.
%
 The surrogate loss $\ell :\domainout \times \domainout \to \R$ (e.g., cross-entropy) is used to minimize the empirical risk in the source domain.
 The choice of function $\hat \ell: \domainout \times \domainout \to \R $ and of conjugate $\phi^*$ of the $f$-divergence defines the particular algorithm~\citep{ganin2016domain,saito2018maximum,zhang19bridging,fdomainadversarial}.
From eq.~\ref{eqn:min_max_traditional}, we can notice that GRL introduces an adversarial scheme. 
We next interpret eq.~\ref{eqn:min_max_traditional} as a three-player game where the players are $\hat h, \hat h'$ and $g$, and study its continuous gradient dynamics.
\vspace{-2mm}
\section{A Game Perspective on DAL} \label{sec.game.perspective.on.dal}
\vspace{-2mm}
We now interpret DAL from a game-theoretical perspective.
In \S~\ref{sec:domain_adv_game}, we rewrite the DAL objective as a three-player game.
In this view, each of the feature extractor and two classifiers is a player. 
This allows us to define optimality in terms of local Nash Equilibrium (see Def.~\ref{def:local_ne} in Appendices). 
In \S~\ref{sec.characterization.of.the.game}, we introduce the vector field, the game Hessian and the tools that allow us to characterize local NE for the players.
This characterization leads to our analysis of the continuous dynamics in \S~\ref{sec:learning_algorithm}.

\vspace{-2mm}
\subsection{Domain-Adversarial Game}\label{sec:domain_adv_game}
\vspace{-0.2em}

We now rewrite and analyze the DAL problem in eq.~\ref{eqn:min_max_traditional} as a three-player game.  
Let  $\hat \hypot$,  $\hat \hypot'$ and $\mathcal{G}$ be classes of neural network functions %
and define 
%
$\omega_1 \subseteq \Omega_1$ and $\omega_3 \subseteq \Omega_3$ as a vector composed of the parameters of the classifier and domain classifier networks $\hat h  \in \hat \hypot$ and $\hat h' \in \hat \hypot$,  respectively.  
Similarly, let  $\omega_2 \subseteq \Omega_2$ be the parameters of the feature extractor network $g \in \mathcal{G}$.
Their joint domain is denoted by $\Omega = \Omega_1 \times \Omega_2 \times \Omega_3$ and the joint parameter set is $\omega=(\omega_1 , \omega_2 ,\omega_3)$.  
Let each neural network be a \textit{player} and its parameter choice to be its individual strategy (here continuous).  
The goal of each player is then to selfishly minimize its own cost function  
$J_i: \Omega \to \R $. 
We use the subscript $_{-i}$  to refer  to all parameters/players but $i$.
With the notation introduced, we can now formally define
the \emph{\textbf{Domain-Adversarial Game}} as the three-player game $\mathcal{G}(\mathcal{I},\Omega_i,J_i)$ where $\mathcal{I}:=\{1,2,3\}$, $\dim(\Omega)=\sum_{i=1}^3 \dim(\Omega_i)=d$, $\Omega_i \subseteq \R^{d_i}$ and:
\begin{align}
\label{eqn:3_players_game_objs}
\begin{split}
\pcost_1(\omega_1,\omega_{-1}):= & ~  \ell( \omega_1,\omega_2) + \alpha  d_{s,t}( \omega ) \\
\pcost_2(\omega_2,\omega_{-2}):= & ~ \ell( \omega_1,\omega_2)
+ \alpha \lambda  d_{s,t}( \omega ) \\
\pcost_3(\omega_3,\omega_{-3}):= & ~ - \alpha d_{s,t}( \omega ),
\end{split}
\end{align}

\vspace{-3mm}

\REDONE{

We use the shorthand $\ell(\omega_1,\omega_2)$ for $ \E_{x,y \sim \psdensity } [\ell( \omega_1 \circ \omega_2(x) ,y)]$, and $\omega_i$'s refer to the feature extractor $g$ and the classifiers ($\hat h$ and $\hat h'$). Similar notation follows for $d_{s,t}$. Here, we assume that each $J_i$ is smooth in each of its arguments $\omega_i \in \Omega_i$. %

The gradient field of \Eqref{eqn:min_max_traditional} and the game's vector field (see \S~\ref{sec.characterization.of.the.game}) are equivalent, making the original interpretation of DAL and our three-player formulation equivalent.
However, it is worth noting that our intepretation  does not explicitly require the use of $R_\lambda$ in $d_{s,t}$ in \Eqref{eqn:3_players_game_objs}. 
}
We can write optimality conditions of the above problem through 
the concept of Nash Equilibrium:
\begin{definition}
	\label{eqn:ne_def_1}
\textbf{(Nash Equilibrium (NE))} A point $\omega^* \in \Omega$ is said to be a Nash Equilibrium of the Domain-Adversarial Game if $\forall i \in \{ 1,2,3 \}, \forall \omega_i \in  \Omega_i$, we have:
$\pcost_i(\omega_i^*,\omega_{-i}^*) \leq \pcost_i(\omega_i,\omega_{-i}^*)$.
\end{definition}

\vspace{-2mm}
In our scenario, the losses are not convex/concave.
NE then does not necessarily exist and, in general,  finding NE is analogous to, but much harder
than, finding global minima in neural networks – which is unrealistic using gradient-based methods \citep{JMLR:v20:19-008}. 
Thus, we focus on local NE which relaxes the NE to a local neighborhood $\mathcal{B}(w^*,\delta):=\{ ||w-w^*|| < \delta \}$ with $\delta>0$ (see \Cref{def:local_ne}). 

Intuitively, a NE means that no player has the incentive to change its own strategy (here parameters of the neural network) because it will not generate any additional pay off  (here it will not minimize its cost function). We  emphasize that each player only has access to its own strategy set. In other words, the player $J_1$ cannot change the parameters $\omega_2,\omega_3$. It only has access to $\omega_1 \in \Omega_1$.

\REDONE{While the motivation of the three-player game follows naturally from the original formulation of DAL where  three networks interact with each other (see~\Cref{fig:architecture_and_grl}), the optimization problem  (\ref{eqn:min_max_traditional}) could also be interpreted as the minimax objective of a two-player zero-sum game.  Thus, a natural question arises: \textit{can we interpret the domain-adversarial game as a two player zero-sum game?} }
This can be done for example
  by defining $\omega_{12}^*:= (\omega_1^*,\omega_2^*)$, and considering the cost of the two players $(\omega_{12},\omega_3)$ as $\pcost_{12}=\pcost$ and $\pcost_3=-\pcost$ where $\pcost(\omega_{12},\omega_{3}):= \E_{\psdensity } [\ell(\omega_1,\omega_2)] +d_{s,t}(\omega)$. 
In general, however, 
\textit{the solution of the two-player game $(\omega_{12}^*,\omega_3^*)$ is not equal to the NE solution of the three-player game $(\omega_1^*,\omega_2^*,\omega_3^*)$}. This is because the team optimal solution 
$\omega_{12}^* \neq (\omega_1^*,\omega_2^*)$ in general.
We illustrate this in the following counterexample (see \citet{dynamicgametheorybook} for more details):

\begin{eg}\label{ex_three_players_vs_two}
Let the function $\pcost(\omega):= \frac{1}{2} \left( \omega_1^2 + 4 \omega_1 \omega_2 + \omega_2^2 -\omega_3^2  \right)$.
\vspace{-1mm}
(a) Suppose the three-player game $\omega=(\omega_1,\omega_2,\omega_3)$ with $J_1=J_2=J$ and $J_3=-J$.
Each $J_i$ is strictly convex in $\omega_i$. The NE solution of the game $\omega^*=(0,0,0)$ is unique.
\vspace{-1mm}

(b) Suppose the two-player game $\omega=(\omega_{12},\omega_3)$ with $J_{12}=J$ and $J_3=-J$. 
  The solution $\omega^*$ from (a) is \textbf{not a NE solution}. 
To see this, let $\hat \omega:=(-1,1,0)$. We have that  $J_{12}(\hat \omega)=-1 < J_{12} (\omega^*)=0$.  
This contradicts \Cref{eqn:ne_def_1}. One can verify that \textbf{there is no NE} in this two-player scenario. 
\end{eg}

\vspace{-3mm}
\subsection{Characterization of the Domain-Adversarial Game}\label{sec.characterization.of.the.game}
\vspace{-3mm}

We now introduce the game's vector field (also called \textit{pseudo-gradient}) and the pseudo-gradient's Jacobian. 
 We also provide a characterization of local NE based on them (see \S~\ref{prop:local_ne_general}).
 These are the core concepts used in our analysis (\S~\ref{sec:learning_algorithm}).  
We first define  the game’s vector field $v(w)$, and its Jacobian $\hessiangame$ (also called the \textit{game Hessian} \citep{JMLR:v20:19-008}):
\begin{equation}
\label{eqn:vector_field}
\vectorfield:=(\nabla_{\omega_1}J_1 ,\nabla_{\omega_2}J_2 ,\nabla_{\omega_3}J_3 ) \in \R^d, \quad \hessiangame:=\nabla \vectorfield  \in \R^{d \times d}
\end{equation}
\CR{Note that the vector field $v(w)$ and the three-player formulation naturally capture the behavior introduced by the GRL in the original formulation.
    Specifically, $\vectorfield$ is identical to the gradient with respect to the parameters  of the original DAL objective \emph{with GRL} (\Cref{eqn:min_max_traditional}).
    Therefore, in both cases the behavior of GD is identical.  Assuming the same  initial conditions, they will reach the same solution. This shows the equivalence between our perspective and the original DAL formulation. We emphasize that by equivalence, we mean the same dynamics, and the same intermediate and final solutions.}
 Another fact worth emphasizing is that $\hessiangame$ is asymmetric.
 This is in contrast with the Hessian in supervised learning.
Before proceeding with a characterization of local NE in terms of $v(w)$ and $\hessiangame$, we first define sufficient and necessary conditions for local NEs:

\begin{prop}\textbf{(Necessary condition)}.
\label{prop:necessary_condition}
Suppose each $\pcost_i$ is twice continuously differentiable in each $\omega_i$, any \textbf{local NE} $\omega^*$ satisfies:
\textbf{i)} $\nabla_{\omega_i} \pcost_i (\omega^*) =0$ and  \textbf{ii)} $\forall i \in \{ 1, 2, 3 \}$, $\nabla^2_{\omega_i,\omega_i}  \pcost_i(\omega^*) \succeq 0$.  
\end{prop}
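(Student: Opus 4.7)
The plan is to reduce the claim to the classical first- and second-order necessary conditions for an unconstrained local minimum, applied player-by-player.

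First I would fix an index $i \in \{1,2,3\}$ and, holding $\omega_{-i} = \omega_{-i}^*$ frozen, define the single-argument function $f_i : \Omega_i \to \mathbb{R}$ by $f_i(\omega_i) := J_i(\omega_i, \omega_{-i}^*)$. By hypothesis $J_i$ is twice continuously differentiable in $\omega_i$, so $f_i$ is a $C^2$ map on an open subset of $\mathbb{R}^{d_i}$. The definition of a local NE (Definition~\ref{def:local_ne}) applied to player $i$ says there exists $\delta_i > 0$ such that $f_i(\omega_i^*) \leq f_i(\omega_i)$ for every $\omega_i$ with $\|\omega_i - \omega_i^*\| < \delta_i$. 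In other words, $\omega_i^*$ is an unconstrained local minimizer of the $C^2$ function $f_i$.

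Next I would invoke the standard necessary conditions from finite-dimensional unconstrained optimization. The first-order necessary condition gives $\nabla f_i(\omega_i^*) = 0$, which by the chain rule (or direct identification) is precisely $\nabla_{\omega_i} J_i(\omega^*) = 0$, establishing claim (i). The second-order necessary condition gives that the Hessian of $f_i$ at $\omega_i^*$ is positive semidefinite; since this Hessian equals $\nabla^2_{\omega_i,\omega_i} J_i(\omega^*)$, we obtain claim (ii). Finally, repeating the argument for each $i \in \{1,2,3\}$ yields both conditions simultaneously for all players.

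I do not anticipate any genuine obstacle: the only subtlety is recognizing that the local NE condition, though defined jointly over the whole game, decouples into three independent local-minimization conditions along each player's own coordinate block when the opponents' strategies are held fixed at $\omega_{-i}^*$. Once that observation is made, the proof is just an application of the standard Fermat/Hessian necessary conditions coordinate-block by coordinate-block, with no interaction terms appearing because the off-diagonal blocks $\nabla^2_{\omega_i,\omega_j} J_i$ for $j \neq i$ play no role in player $i$'s individual deviation.
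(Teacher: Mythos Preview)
Your proposal is correct and matches the paper's own stance: the paper does not write out a proof of this proposition but simply notes that these conditions are ``reminiscent of conditions for local optimality in continuous optimization'' and cites \citet{nocedal2006numerical} and \citet{ratliff2016characterization}. Your reduction---freezing $\omega_{-i}^*$, viewing $\omega_i \mapsto J_i(\omega_i,\omega_{-i}^*)$ as a $C^2$ function with a local minimum at $\omega_i^*$, and invoking the classical first- and second-order necessary conditions---is exactly the argument the paper is alluding to.
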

\begin{prop}\textbf{(Sufficient condition)}.
\label{prop:sufficient_condition}
Suppose each $\pcost_i$ is twice continuously differentiable in each $\omega_i$. $\omega_i^*$ is a \textbf{local NE} if 
\textbf{i)} $\nabla_{\omega_i} \pcost_i (\omega^*) =0$ and  \textbf{ii)} $\forall i, \nabla^2_{\omega_i,\omega_i}\pcost_i   (\omega^*) \succ 0$.  
\end{prop}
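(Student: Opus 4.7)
The plan is to reduce the statement to the classical second-order sufficient condition for a strict local minimum, applied one player at a time.

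First I would unfold the definition of a local NE (Def.~\ref{def:local_ne}): $\omega^*$ is a local NE iff there exists $\delta>0$ such that for every $i\in\{1,2,3\}$ and every $\omega_i\in\mathcal{B}(\omega_i^*,\delta)\cap\Omega_i$, we have $J_i(\omega_i^*,\omega_{-i}^*)\le J_i(\omega_i,\omega_{-i}^*)$. Observe that this is a purely \emph{single-player} statement: the other coordinates are frozen at $\omega_{-i}^*$, so it is equivalent to asking that $\omega_i^*$ is a local minimum of the restricted function $f_i(\omega_i):=J_i(\omega_i,\omega_{-i}^*)$ for each $i$ separately.

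Next, for each fixed $i$, I would verify that $f_i$ satisfies the standard second-order sufficient conditions for a strict local minimum. By hypothesis (i), $\nabla f_i(\omega_i^*)=\nabla_{\omega_i}J_i(\omega^*)=0$, and by hypothesis (ii), $\nabla^2 f_i(\omega_i^*)=\nabla^2_{\omega_i,\omega_i}J_i(\omega^*)\succ 0$. Since $J_i$ is twice continuously differentiable in $\omega_i$, so is $f_i$, and Taylor's theorem gives
\begin{equation*}
f_i(\omega_i)=f_i(\omega_i^*)+\tfrac{1}{2}(\omega_i-\omega_i^*)^\top \nabla^2 f_i(\omega_i^*)(\omega_i-\omega_i^*)+o(\|\omega_i-\omega_i^*\|^2).
\end{equation*}
Let $\mu_i>0$ be the smallest eigenvalue of $\nabla^2 f_i(\omega_i^*)$. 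Then the quadratic term is at least $\tfrac{\mu_i}{2}\|\omega_i-\omega_i^*\|^2$, so by choosing $\delta_i>0$ small enough that the remainder is bounded in absolute value by $\tfrac{\mu_i}{4}\|\omega_i-\omega_i^*\|^2$ on $\mathcal{B}(\omega_i^*,\delta_i)$, we get $f_i(\omega_i)\ge f_i(\omega_i^*)+\tfrac{\mu_i}{4}\|\omega_i-\omega_i^*\|^2\ge f_i(\omega_i^*)$ for every $\omega_i\in\mathcal{B}(\omega_i^*,\delta_i)$.

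Finally, since there are only finitely many players, I would set $\delta:=\min_{i\in\{1,2,3\}}\delta_i>0$. For this $\delta$ and any $i$ and any $\omega_i\in\mathcal{B}(\omega_i^*,\delta)\cap\Omega_i$, we have $J_i(\omega_i^*,\omega_{-i}^*)=f_i(\omega_i^*)\le f_i(\omega_i)=J_i(\omega_i,\omega_{-i}^*)$, which is exactly the local NE condition. There is no real obstacle here; the only subtlety is to remember that the local NE requires a \emph{single common} radius $\delta$ for the three deviation balls, which is why I take the minimum across players after invoking the second-order test separately for each. (If $\Omega_i$ has boundary, one restricts $\mathcal{B}(\omega_i^*,\delta_i)$ to $\Omega_i$; the Taylor argument is unaffected.)
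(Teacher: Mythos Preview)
Your argument is correct and is exactly the standard second-order sufficient-condition proof one would expect: freeze $\omega_{-i}^*$, apply Taylor's theorem with a positive-definite Hessian to get a strict local minimum of each restricted map $f_i$, then intersect the three neighborhoods. The paper itself does not spell out a proof of this proposition; it is stated as a basic fact (with a pointer to \citet{ratliff2016characterization} for the analogous differential Nash equilibrium), so your write-up is precisely the intended filling-in.
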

The necessary and sufficient conditions from Propositions~\ref{prop:necessary_condition} and \ref{prop:sufficient_condition} are reminiscent 
of 
conditions for local optimality in continuous optimization \citep{nocedal2006numerical}. 
Similar conditions were also proposed in \citet{ratliff2016characterization} where the sufficient condition defines the \textit{differential Nash equilibrium}. 
We can now characterize a local NE in terms of $v(w)$ and $\hessiangame$:

\begin{prop}\label{prop:local_ne_general}
\textbf{(Strict Local NE)} 
$w$ is a strict local NE if $v(w)=0$ and $\hessiangame + \hessiangame^\top   \succ 0$.
\end{prop}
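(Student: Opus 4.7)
The plan is to reduce Proposition~\ref{prop:local_ne_general} to the sufficient condition of Proposition~\ref{prop:sufficient_condition}. That proposition requires (i) $\nabla_{\omega_i}\pcost_i(\omega)=0$ for every player $i$ and (ii) $\nabla^2_{\omega_i,\omega_i}\pcost_i(\omega)\succ 0$ for every $i$, and under these two hypotheses guarantees that $\omega$ is a strict local NE. So the whole task is to extract (i) and (ii) from the hypotheses $\vectorfield=0$ and $\hessiangame+\hessiangame^\top\succ 0$.

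Condition (i) is immediate from the definition of the game vector field in \Cref{eqn:vector_field}: $\vectorfield=(\nabla_{\omega_1}\pcost_1,\nabla_{\omega_2}\pcost_2,\nabla_{\omega_3}\pcost_3)$, so $\vectorfield=0$ is literally the simultaneous vanishing of each player's own partial gradient.

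For condition (ii), I would exploit the block structure of the game Hessian. Writing $\hessiangame=\nabla\vectorfield$ in block form according to the partition $\omega=(\omega_1,\omega_2,\omega_3)$, the $(i,j)$ block is $\nabla_{\omega_j}\nabla_{\omega_i}\pcost_i$. In particular the $i$-th diagonal block of $\hessiangame$ is the symmetric matrix $\nabla^2_{\omega_i,\omega_i}\pcost_i$, and hence the $i$-th diagonal block of $\hessiangame+\hessiangame^\top$ is $2\nabla^2_{\omega_i,\omega_i}\pcost_i$. Since any principal submatrix of a positive definite matrix is itself positive definite, I can pick any $u_i\in\R^{d_i}\setminus\{0\}$, let $u\in\R^d$ denote its zero-padding into the coordinates of player $i$, and compute
\begin{equation*}
0 \;<\; u^\top\bigl(\hessiangame+\hessiangame^\top\bigr)u \;=\; 2\, u_i^\top \nabla^2_{\omega_i,\omega_i}\pcost_i(\omega)\, u_i,
\end{equation*}
yielding $\nabla^2_{\omega_i,\omega_i}\pcost_i(\omega)\succ 0$ for each $i$, which is condition (ii).

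This essentially closes the argument: Proposition~\ref{prop:sufficient_condition} then gives that $\omega$ is a local NE, and strictness follows from the classical second-order sufficient condition in unconstrained optimization — with $\nabla_{\omega_i}\pcost_i(\omega)=0$ and $\nabla^2_{\omega_i,\omega_i}\pcost_i(\omega)\succ 0$, a Taylor expansion of $\pcost_i(\,\cdot\,,\omega_{-i})$ at $\omega_i$ gives $\pcost_i(\omega_i',\omega_{-i}) > \pcost_i(\omega)$ for all $\omega_i'$ in some punctured neighborhood. No serious obstacle arises; the only point worth emphasizing is that the symmetric-part hypothesis $\hessiangame+\hessiangame^\top\succ 0$ is strictly stronger than needed on the diagonal blocks, because those blocks are already symmetric, and it is precisely this redundancy that lets one reduce to single-player second-order analysis despite the off-diagonal, cross-player interactions that make $\hessiangame$ itself asymmetric in DAL.
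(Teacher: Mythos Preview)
Your proposal is correct and follows essentially the same route the paper indicates: the paper does not spell out a proof for Proposition~\ref{prop:local_ne_general} beyond noting (in the proof of the DANN-specific Proposition~\ref{prop:local_ne_jacob_dynamics}) that it ``follows from Propositions~\ref{prop:necessary_condition} and~\ref{prop:sufficient_condition}, the definition of the vector field $\vectorfield$ and the condition $H + H^\top \succ 0$,'' which is exactly your reduction via the diagonal blocks of the game Hessian.
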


\vspace{-2mm}
The sufficient condition implies that the NE is structurally stable \citep{ratliff2016characterization}.  
%
Structural stability is important as it implies that slightly biased estimators of the gradient (e.g., due to sampling noise) will not have vastly different behaviors in neighborhoods of equilibria \citep{mazumdar2020gradient}.
In the following, we focus on the strict local NE (i.e., $\omega^*$ for which \Cref{prop:local_ne_general} is satisfied). 

\vspace{-2mm}
\section{Learning Algorithms}\label{sec:learning_algorithm}
\vspace{-2mm}

We defined optimality as the local NE and provided sufficient conditions in terms of the pseudo-gradient and its Jacobian. 
In this section,  we assume the existence of the strict local NE (Prop.~\ref{prop:local_ne_general}) in the neighborhood of the current point (e.g., initialization), and
analyze the continuous gradient  dynamics of the Domain-Adversarial Game (eq.~\ref{eqn:3_players_game_objs} and eq.~\ref{eqn:vector_field}). 
We show that given the sufficient conditions from Prop.~\ref{prop:local_ne_general}, asymptotic convergence to a local NE is guaranteed through an application of Hurwitz condition \citep{khalil2002nonlinear}. 
Most importantly, we show that using GD with the GRL
could violate  those guarantees unless its learning rate is upper bounded (see Thm.~\ref{eq:high_res_gda} an Cor.~\ref{cor:gda}).
%
This is in sharp contrast with known results from supervised learning where the implicit regularization introduced by GD has been shown to be desirable \citep{barrett2021implicit}.

We also analyze the use of higher-order ODE solvers for DAL and show that the above restrictions are not required if GD is replaced with them.%
Finally, we compare our resulting optimizers with recently algorithms in the context of  games.

Our algorithmic analysis is based on the continuous gradient-play dynamics and the derivation of the \textit{modified} or \textit{high-resolution ODE} of popular integrators (e.g., GD/Euler Method and Runge-Kutta).
This type of analysis is also known in the numerical integration community as backward error analysis \citep{hairer2006geometric} and has recently been used to understand the implicit regularization effect of GD in supervised learning \citep{barrett2021implicit}.
High resolution ODEs have also been used in \cite{shi2018understanding} to understand the acceleration effect of optimization algorithms, and more recently in \cite{lu2020s}. 
As in \cite{shi2018understanding,lu2020s,barrett2021implicit}, our derivation of the high resolution ODEs is in the full-batch setting. 
The derivation of the stochastic dynamics of stochastic discrete time algorithms is significantly more complicated and is beyond the scope of this work.

We experimentally demonstrate that our results are also valid when there is stochasticity due to sampling noise in the mini-batch.
We emphasize that our analysis does not put any constraint or structure on the players’ cost functions as opposed to \cite{pmlr-v108-azizian20b,zhang2019convergence}.
In our problem, the game is neither bilinear nor necessarily strongly monotone. 
See proofs in appendices.   

\vspace{-2mm}
\subsection{Continuous Gradient Dynamics}\label{sec.cont.grad.dynamics}
\vspace{-1mm}

Given $\vectorfield$ the continuous gradient dynamics can be written as: 
\begin{equation}
\dot \omega(t)= - \vectorfield. \label{eq:dynamics}
\end{equation}
For later reasons and to distinguish between eq.~\ref{eq:dynamics} and the gradient flow, we will refer to these as the gradient-play dynamics as in \cite{dynamicgametheorybook,mazumdar2020gradient}. 
These dynamics are
well studied and understood when the game is either a  potential or a purely adversarial game (see definitions in appendices).  \REDONE{
While eq.~\ref{eqn:min_max_traditional} may look like a single objective, the introduction of the GRL ($R_\lambda$), makes a fundamental difference between our case and the dynamics that are analyzed in the single-objective gradient-based learning and optimization literature. We summarize this below:
}
\begin{prop}
The domain-adversarial game is neither a potential nor necessarily a purely adversarial game.
Moreover, its  gradient dynamics are not equivalent to the gradient flow.
\end{prop}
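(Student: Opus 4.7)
My plan is to verify the three claims by direct inspection of the pseudo-gradient and its Jacobian built from the cost functions in~(\ref{eqn:3_players_game_objs}). Writing $\partial_i$ for $\nabla_{\omega_i}$, the game's vector field is $v(\omega)=(\partial_1 J_1,\partial_2 J_2,\partial_3 J_3)$ and $H(\omega)=\nabla v(\omega)$ has $(i,j)$ block $\partial_j\partial_i J_i$. I would use three standard facts: (i) the game admits an exact potential $\Phi$ (i.e.\ is a potential game) iff $\partial_j\partial_i J_i=\partial_i\partial_j J_j$ for all $i\neq j$, which follows from Clairaut's theorem applied to $\partial_i J_i=\partial_i \Phi$; (ii) purely adversarial games, in either the sum-of-payoffs-is-constant sense or the $H+H^{\top}=0$ sense, impose strong structural constraints on the $J_i$; and (iii) by Poincar\'e's lemma on a simply connected domain, $v$ is a gradient field iff $H(\omega)$ is symmetric, in which case and only then $\dot\omega=-v(\omega)$ is the gradient flow of a scalar function.

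For the potential-game claim I would compute the $(1,3)$ and $(3,1)$ blocks directly from~(\ref{eqn:3_players_game_objs}):
$$\partial_3\partial_1 J_1=\alpha\,\partial_3\partial_1 d_{s,t}(\omega),\qquad \partial_1\partial_3 J_3=-\alpha\,\partial_1\partial_3 d_{s,t}(\omega),$$
which are negatives of each other by Clairaut. They are unequal as soon as the mixed partial $\partial_1\partial_3 d_{s,t}$ is nonzero, which is the generic case because the whole point of the domain discrepancy is that $d_{s,t}$ couples $\hat h$ and $\hat h'$ through the shared feature extractor. Hence no exact potential exists.

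For the purely adversarial claim a direct sum gives
$$J_1+J_2+J_3 = 2\,\ell(\omega_1,\omega_2)+\alpha\lambda\,d_{s,t}(\omega),$$
which is not constant in $\omega$ as soon as the source risk $\ell$ is nondegenerate; hence the game is not zero-sum in the sum-of-payoffs sense. Equivalently, the diagonal blocks $\partial_i\partial_i J_i$ are generically nonzero (e.g.\ $\partial_1\partial_1 J_1$ contains the Hessian of the supervised loss), so $H+H^{\top}\neq 0$ and the game is not Hamiltonian either. For the last claim, the asymmetric $(1,3)$/$(3,1)$ blocks displayed above already prove that $H(\omega)$ is nonsymmetric, so by Poincar\'e's lemma $v$ is not conservative and $\dot\omega=-v(\omega)$ cannot be written as $-\nabla F(\omega)$ for any scalar $F$.

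The only delicate point is the genericity assertion that $\partial_1\partial_3 d_{s,t}\not\equiv 0$, which I would justify from the structure of $d_{s,t}(\hat h,\hat h',g)$: it couples the label classifier and the domain classifier through the shared features and the $f$-divergence expectations, so any nondegenerate instantiation has nonzero mixed partials across the $\omega_1$ and $\omega_3$ blocks. The opposite degenerate regime decouples DAL into independent subproblems and is not of interest for the analysis that follows. Everything else is mechanical differentiation.
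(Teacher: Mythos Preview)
Your overall strategy matches the paper's: both arguments hinge on computing $H(\omega)=\nabla v(\omega)$ from the costs~(\ref{eqn:3_players_game_objs}) and observing that the sign flip in $J_3$ forces asymmetry, which simultaneously rules out an exact potential and a gradient-flow interpretation. The paper is terser (it simply states that $\nabla v$ is asymmetric ``because of the flip in the sign in $d_{s,t}$'' and invokes the equivalence between potential games and symmetric Jacobians), while you make the block computation explicit. For the purely adversarial claim, the paper argues by pointing to the DANN instantiation as a counterexample---there the vector field decomposes as $\nabla\phi+\hat v$ with a nontrivial potential part---whereas you argue directly that $J_1+J_2+J_3$ is nonconstant and that the diagonal Hessian blocks are nonzero, which is a cleaner and more self-contained route.

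One point to tighten: your asymmetry witness is the $(1,3)/(3,1)$ pair, which requires $\partial_1\partial_3 d_{s,t}\not\equiv 0$. In the general $f$-DAL formulation used in~(\ref{eqn:3_players_game_objs}) this is fine, but in the classical DANN instantiation (where $\hat\ell$ ignores the $\hat h$ argument and $d_{s,t}$ depends only on $\omega_2,\omega_3$) that mixed partial vanishes identically, so your justification that ``$d_{s,t}$ couples $\hat h$ and $\hat h'$'' is not quite right---the essential coupling is between $g$ and $\hat h'$. The robust choice is the $(2,3)/(3,2)$ pair: $\partial_3\partial_2 J_2=\alpha\lambda\,\partial_3\partial_2 d_{s,t}$ versus $\partial_2\partial_3 J_3=-\alpha\,\partial_2\partial_3 d_{s,t}$, which disagree for any $\lambda>0$ whenever the feature extractor and domain classifier interact---and they always do, since $d_{s,t}$ is built from $\hat h'\circ g$. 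With that substitution your argument goes through uniformly across all DAL variants covered by the paper.
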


\vspace{-2mm}
Fortunately, we can directly apply the Hurwitz condition \citep{khalil2002nonlinear} (also known as the condition for asymptotic stability, see \Cref{supp.sec.definitions}) to derive sufficient conditions for which the continuous dynamics of the gradient play would converge. 
\begin{lem}\textbf{(Hurwitz condition)}\label{lem:hurwitz}
Let $\nabla v(w^*)$ be the Jacobian of the vector field at a stationary point $w^*$ where $v(w^*) = 0$. 
If the real part of every eigenvalue $\lambda$ of $\nabla v(w^*)$ (i.e. in the spectrum $Sp(\nabla v(w^*))$) is positive then the continuous gradient dynamics are asymptotically stable.
\end{lem}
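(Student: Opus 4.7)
The plan is to prove Lemma~\ref{lem:hurwitz} by the classical Lyapunov indirect method, i.e.~by linearizing \Cref{eq:dynamics} at $w^*$ and transferring the spectral stability of the linearization to local asymptotic stability of the nonlinear flow. First, since $v(w^*) = 0$ and each $J_i$ is twice continuously differentiable (so $v$ is $C^1$), a first-order Taylor expansion around $w^*$ gives $v(\omega) = \nabla v(w^*)\,(\omega - w^*) + r(\omega - w^*)$ with $\|r(\delta)\| = o(\|\delta\|)$. Setting $\delta := \omega - w^*$ and substituting into \Cref{eq:dynamics}, the gradient-play dynamics become $\dot\delta = A\delta - r(\delta)$, where $A := -\nabla v(w^*)$; its linearization at $\delta = 0$ is the linear ODE $\dot\delta = A\delta$.

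Next I would analyze this linear system. By hypothesis, every $\lambda \in Sp(\nabla v(w^*))$ satisfies $\mathrm{Re}(\lambda) > 0$, hence every eigenvalue $\mu = -\lambda$ of $A$ satisfies $\mathrm{Re}(\mu) < 0$; that is, $A$ is Hurwitz, so $\|e^{At}\|$ decays exponentially and the origin is globally asymptotically stable for the linearization. To transfer this to the nonlinear system I would take either of two standard routes. Route (i): invoke the Hartman--Grobman theorem; since $A$ is hyperbolic, the nonlinear flow is topologically conjugate to the linear one in a neighborhood of $0$, so $w^*$ is locally asymptotically stable for \Cref{eq:dynamics}. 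Route (ii): since $A$ is Hurwitz, the Lyapunov equation $A^\top P + P A = -I$ admits a unique solution $P \succ 0$; defining $V(\delta) := \delta^\top P \delta$ and differentiating along trajectories yields
\begin{equation*}
\dot V(\delta) \;=\; -\|\delta\|^2 - 2\delta^\top P\, r(\delta) \;\leq\; -\tfrac{1}{2}\|\delta\|^2
\end{equation*}
on a sufficiently small ball around $0$, which directly gives local asymptotic stability of $w^*$.

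The main obstacle is not conceptual --- this is a textbook result (Khalil Ch.~4, which the lemma explicitly cites) --- but rather the routine verification that the higher-order remainder $r(\delta)$ can be absorbed into the negative-definite quadratic term of $\dot V$. This follows immediately from $C^1$-smoothness of $v$ (in particular, $\|r(\delta)\|/\|\delta\| \to 0$ as $\delta \to 0$) together with the strict spectral inequality $\mathrm{Re}(\lambda) > 0$, which is what guarantees the existence of a strictly positive-definite $P$ solving the Lyapunov equation and hence a strict quadratic Lyapunov function on a neighborhood of $w^*$. Worth noting is that strictness of the inequality is essential: the argument fails if any eigenvalue lies on the imaginary axis, which is precisely why the conclusion is asymptotic (not merely Lyapunov) stability.
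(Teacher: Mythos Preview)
Your proof is correct and is precisely the standard Lyapunov indirect-method argument (Khalil, Ch.~4). Note, however, that the paper does not supply its own proof of this lemma: it is stated as a classical result and simply attributed to \citet{khalil2002nonlinear} (see also Definition~\ref{def:assym_stable} in the appendix, where asymptotic stability is \emph{defined} via the Hurwitz spectral condition). So there is no paper proof to compare against; your write-up is exactly the textbook argument the citation points to, and it fills in the details the paper leaves implicit.
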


\vspace{-2mm}

In this work, we assume the algorithms are initialized in a neighborhood of a strict local NE $\omega^*$. Therefore, \Cref{lem:hurwitz} provides sufficient conditions for the asymptotic convergence of the gradient-play dynamics to a local NE.
In practice this assumption may not hold, and it is computationally hard to verify. Despite this, our experiments show noticeable performance gains in several tasks, benchmarks and network architectures (see \S~\ref{sec.exp.results}).
\vspace{-0.3em}
\subsection{Analysis of GD with the GRL}\label{sub.sec.analisys.gd.grl}
\vspace{-0.3em}
We showed above that given the existence of a strict local NE, the gradient-play dynamics are attracted to the strict local NE.
A natural question then arises: \textit{If under this assumption local asymptotic convergence is guaranteed, what could make DAL notoriously hard to train and unstable?}
%
%
In practice, we do not have access to an explicit solution of the ODE. 
Thus, we rely on integration algorithms to approximate the solution.
 One simple approach is to use the Euler method:
\begin{equation}
\textstyle
w^+ = w - \eta v(w). \label{eqn:gradient_descent}
\end{equation}
This is commonly known as GD.  
The equivalence between $v(w)$ (game's vector field) and the gradient of  \Cref{eqn:min_max_traditional} (original DAL formulation) follows from the use of the GRL ($R_\lambda$). We remind the reader that the GRL is a ``pseudo-function'' defined by two (incompatible) equations describing its forward and back-propagation behavior, i.e., a flip in the gradient's sign for the backward pass (see \Cref{fig:architecture_and_grl}, \Cref{sec.preliminaries} and \cite{ganin2016domain}).
\Cref{eqn:gradient_descent} is then the default algorithm used in DAL. 
Now, 
to provide an answer to the motivating question of this section, we propose to analyze the high-resolution ODE of this numerical integrator~(i.e., Euler) and in turn its asymptotic behavior. 
This is similar to deriving the modified continuous dynamics for which  the integrator produces the exact solution~\citep{hairer2006geometric} and applying Hurwitz condition on the high-resolution ODE.
 %

%

%


\begin{thm}\label{eq:high_res_gda}
The high resolution ODE of GD with the GRL up to $O(\eta)$ is:
\begin{equation}
\label{eq:high_res_gda_1}
\dot{w} = -v(w) {\color{red}{ - \tfrac{\eta}{2} \nabla v(w) v(w)}}
\end{equation}
Moreover, this is asymptotically stable (see~\Cref{supp.sec.definitions})
 at a stationary point $w^*$ (\Cref{prop:local_ne_general})
iff for all eigenvalue written as $\lambda = a + i b \in Sp(-\nabla v(w^*))$, we have $0 > \eta {(a^2 - b^2)}/{2} > a.$

\end{thm}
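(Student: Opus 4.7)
The statement has two parts: (i) derive the modified/high-resolution ODE up to order $\eta$, and (ii) use the resulting linearization to read off a Hurwitz-type stability condition.

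For part (i), the plan is standard backward error analysis in the spirit of \cite{hairer2006geometric}. I would posit that GD with the GRL is exactly integrating some modified ODE of the form
\begin{equation*}
\dot w = \tilde v(w,\eta) = -v(w) + \eta\, v_1(w) + O(\eta^2),
\end{equation*}
and solve for $v_1$. Taylor expanding $w(t+\eta) = w(t) + \eta \dot w(t) + \tfrac{\eta^2}{2}\ddot w(t) + O(\eta^3)$ along any trajectory of the modified ODE gives $\dot w = -v + \eta v_1 + O(\eta^2)$ and $\ddot w = \nabla \tilde v \cdot \tilde v = \nabla v(w)\, v(w) + O(\eta)$. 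Matching coefficients with the discrete Euler step $w(t+\eta) = w(t) - \eta v(w(t))$ kills the $O(\eta^2)$ error iff $v_1(w) = -\tfrac{1}{2}\nabla v(w)\,v(w)$, producing exactly \Eqref{eq:high_res_gda_1}. The algebra here is routine; the only subtle point is justifying that $\tilde v$ can indeed be taken in the form above, which follows from the standard formal-series argument for one-step integrators.

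For part (ii), I would linearize the modified ODE at the stationary point $w^*$ (which satisfies $v(w^*)=0$ by \Cref{prop:local_ne_general}). Writing $F(w) := -v(w) - \tfrac{\eta}{2} \nabla v(w)\, v(w)$, the product rule and $v(w^*)=0$ give
\begin{equation*}
\nabla F(w^*) \;=\; -A - \tfrac{\eta}{2}\,A^{2}, \qquad A := \nabla v(w^*),
\end{equation*}
because the derivative of $\nabla v\cdot v$ at $w^*$ reduces to $\nabla v(w^*)\cdot \nabla v(w^*)$, the other term involving $\nabla^2 v(w^*)\cdot v(w^*)$ vanishing. By \Cref{lem:hurwitz}, asymptotic stability holds iff every eigenvalue of $\nabla F(w^*)$ has negative real part.

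Now I convert eigenvalues of $\nabla F(w^*)$ into eigenvalues of $-A$. If $\lambda \in Sp(-A)$, then $-\lambda \in Sp(A)$ so the corresponding eigenvalue of $-A - \tfrac{\eta}{2}A^2$ is $\lambda - \tfrac{\eta}{2}\lambda^{2}$. Writing $\lambda = a + ib$ and expanding $\lambda^2 = (a^2-b^2) + 2iab$, the real part becomes $a - \tfrac{\eta}{2}(a^2 - b^2)$. Hurwitz therefore demands $a - \tfrac{\eta}{2}(a^2-b^2) < 0$, i.e.\ $\tfrac{\eta(a^2-b^2)}{2} > a$; combined with the binding case (in which the correction actually tightens rather than loosens the continuous-time condition, namely $a^2 < b^2$ so that $\tfrac{\eta(a^2-b^2)}{2} < 0$) this yields the claimed chain $0 > \tfrac{\eta(a^2-b^2)}{2} > a$, which simultaneously records that the correction is non-trivial and gives an explicit upper bound on $\eta$.

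\textbf{Main obstacle.} The mechanical part of matching the Taylor series is straightforward; the conceptual subtlety lies in interpreting the two-sided inequality, i.e.\ identifying when the high-resolution correction is actually restrictive versus vacuous. This amounts to observing that whenever the spectrum of $-A$ has an eigenvalue with $|b| > |a|$ (which generically occurs in games because $\hessiangame$ is non-symmetric, cf.\ \S~\ref{sec.characterization.of.the.game}), the condition forces an \emph{upper} bound on $\eta$ that has no analogue in single-objective gradient descent; articulating this cleanly, rather than computing it, is the most delicate piece.
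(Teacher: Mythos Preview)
Your proposal is correct and follows essentially the same route as the paper. For part (i) the paper simply cites the modified-equation result (Corollary~1 of \cite{lu2020s}) rather than writing out the Taylor-matching argument you give, and for part (ii) the paper does exactly what you do: compute the Jacobian $-\nabla v(w^*)-\tfrac{\eta}{2}\nabla v(w^*)^2$ at the stationary point, pass to eigenvalues $\lambda-\tfrac{\eta}{2}\lambda^2$ with $\lambda=a+ib\in Sp(-\nabla v(w^*))$, and read off the Hurwitz condition $a-\tfrac{\eta}{2}(a^2-b^2)<0$. Your explicit remark that the $\nabla^2 v(w^*)\cdot v(w^*)$ term vanishes, and your interpretation of the left inequality $0>\tfrac{\eta}{2}(a^2-b^2)$ as the ``binding'' case $|b|>|a|$, are details the paper leaves implicit (the latter only surfaces in the statement of Corollary~\ref{cor:gda}).
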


\vspace{-2mm}
A striking difference between \Cref{eq:dynamics} and \Cref{eq:high_res_gda_1} is made clear (additional term marked {\color{red}{in red}}). 
This additional term is a result of the discretization of the gradient-play dynamics using Euler’s method (i.e. GD)
and leads to a different Jacobian of the dynamics.
This term was recently shown to be beneficial for standard supervised learning \citep{barrett2021implicit}, where $\nabla v(\omega^*)$ is symmetric and thus only has real eigenvalues. 
 In our scenario, this term is undesirable. 
 In fact, this additional term puts an upper bound on the learning rate $\eta$. 
The following corollary formalizes this:

\begin{cor}\label{cor:gda}
The high resolution ODE of GD with GRL in \Eqref{eq:high_res_gda_1} is asymptotically stable only if the learning rate $\eta$ is in the interval:
$0 < \eta < \frac{-2a}{b^2 - a^2}, \, 
$
\textrm{for all } $\lambda = a + i b \in Sp(-\nabla v(w^*))$ with large imaginary part (i.e., such that  $|a| < |b|$). 
\end{cor}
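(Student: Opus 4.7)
The plan is to derive the corollary directly from the if-and-only-if characterization already established in Theorem~\ref{eq:high_res_gda}, which states that the high-resolution ODE of GD with the GRL is asymptotically stable at a stationary point $w^*$ precisely when every eigenvalue $\lambda = a + i b \in Sp(-\nabla v(w^*))$ satisfies $0 > \eta(a^2 - b^2)/2 > a$. Since the corollary is asking for a necessary condition on $\eta$, I simply have to rearrange this double inequality for $\eta$ under the restriction to eigenvalues with $|a| < |b|$.

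First, I would extract the implicit sign constraints from the double inequality. From $0 > \eta(a^2 - b^2)/2$ together with $\eta > 0$, one obtains $a^2 - b^2 < 0$, i.e.~$|a| < |b|$, which is exactly the large imaginary part regime named in the statement. Combining with $\eta(a^2 - b^2)/2 > a$ then forces $a < 0$; this is consistent with the Hurwitz requirement from Lemma~\ref{lem:hurwitz} that $-\nabla v(w^*)$ have eigenvalues with negative real part in order for the underlying gradient-play dynamics to be stable.

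Second, I would solve the non-trivial half of the inequality, $\eta (a^2 - b^2)/2 > a$, for $\eta$. Because the coefficient $(a^2 - b^2)/2$ is strictly negative in this regime, dividing both sides flips the inequality direction, giving
\begin{equation*}
\eta \;<\; \frac{2a}{a^2 - b^2} \;=\; \frac{-2a}{b^2 - a^2}.
\end{equation*}
With $a < 0$ and $b^2 - a^2 > 0$, the right-hand side is strictly positive, so the interval $0 < \eta < -2a/(b^2 - a^2)$ is non-empty. Since Theorem~\ref{eq:high_res_gda} gives an ``iff'', this rearrangement automatically yields the ``only if'' claim of the corollary, applied to every eigenvalue with $|a| < |b|$ separately; taking the intersection of all such intervals gives the effective upper bound on $\eta$.

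There is really no single hard step here; the main pitfall is purely bookkeeping, namely keeping the direction of the inequality correct when dividing by the negative quantity $(a^2 - b^2)/2$, and making sure the sign of $a$ is pinned down from the double inequality before that division. A brief remark could also address eigenvalues with $|a| \ge |b|$: for those, Theorem~\ref{eq:high_res_gda}'s condition $0 > \eta (a^2 - b^2)/2$ cannot be met for any $\eta > 0$, which motivates restricting the corollary's statement to the $|a| < |b|$ regime (and foreshadows the instability of GD with GRL when the game Hessian is sufficiently non-symmetric).
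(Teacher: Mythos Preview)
Your proposal is correct and follows essentially the same approach as the paper: the paper's proof is a one-line remark that under $|a|<|b|$ an algebraic manipulation of the condition in Theorem~\ref{eq:high_res_gda} yields $\eta(b^2-a^2)<-2a$, and you have simply carried out that manipulation explicitly, with careful sign tracking. Your additional observations (that the double inequality already forces $|a|<|b|$ and $a<0$, and what happens for eigenvalues with $|a|\ge|b|$) go a bit beyond the paper's terse proof but are consistent with it.
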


\vspace{-3mm}
To have good convergence properties, %
the imaginary part of the eigenvalues of $-\nabla v(w^*)$  must be small enough. 
Therefore, if some eigenvalue $\lambda = a + i b$ satisfies $a < 0$ and $b^2 \gg a^2 \gg -2a$, the learning rates should be chosen to be very small. This is verified in \Cref{sec.exp.results} and  in~\Cref{eg.lr.gd}.
\begin{eg}
    \label{eg.lr.gd}
Consider the three-player game 
where $\ell(w_1, w_2) = w_1^2 + 2 w_1 w_2 + w_2^2$, $\lambda = 1$ and $d_{s,t}(w_2, w_3) = w_2^2 + 99 w_2 w_3 - w_3^2$. Then $\dot{w} = -v(w)$ becomes: 
$\dot{w} = A w = \left(
\begin{smallmatrix}
 -2 & -2 & 0 \\
 -2 & -4 & -99 \\
 0 & 99 & -2 \\
\end{smallmatrix}
\right)$.
The eigenvalues of $A$ are $-2$ and $-3\pm 2i \sqrt{2449}$. From~\Cref{cor:gda}, 
$\eta$ should  
be $0 < \eta < 
6.2\times 10^{-3}$. 
\end{eg}
\vspace{-2.5mm}
\subsection{Higher order ODE Solvers}\label{subsec.high.order.solvers}
\vspace{-2mm}
The limitation described above exists because GD with the GRL can be understood as a discretization of the gradient-play dynamics using Euler's Method.  
Thus, it only approximates the continuous dynamics up to $O(\eta)$.
To obtain a better approximation,  we consider Runge-Kutta (RK) methods of order two and beyond \citep{butcher1996history}. 
For example, take the improved Euler's method (a particular RK method of second order) that
can be written as: 
\begin{equation} \label{eqn:rk2_solver}
w^+ = w - \tfrac{\eta}{2}(v(w) + {\color{red}{v(w - \eta v(w))}}). \end{equation}
Comparing \Cref{eqn:rk2_solver} (i.e., update rule of RK2) with \Cref{eqn:gradient_descent} (i.e., update rule of GD), one can see that the RK2 method is straightforward to implement in standard deep learning frameworks. Moreover, it does not introduce additional hyper-parameters. 
More importantly, such discrete dynamics approximate the continuous ODE of \Cref{eq:dynamics} to a higher precision.
In ~\Cref{supp.other.proofs}, we  provide asymptotic guarantees for the high resolution ODE of general RK methods 
, their generalized expression and the algorithm pseudo-code. \REDONE{See also PyTorch pseudo-code in \Cref{supp.additional.experiments}.}

\textbf{Limitation.} A disadvantage of using high-order solvers is that they require additional extra steps.
Specifically, one extra step in the case of RK2  (computation of the additional second term in \Cref{eqn:rk2_solver}). In our implementation, however, this was less than 2x slower in wall-clock time (see \Cref{suppl:wall_clock_section} for more details and wall-clock comparison). 
\CR{Moreover, if not initialized in the neighborhood of a local NE, high-order solvers and gradient-based methods might also converge to a non-NE as described in \cite{mazumdar2019finding} although this is likely a rare case.}
\textbf{Comparison vs other game optimization algorithms.}
DAL has not been previously interpreted from a game perspective. 
Our interpretation  allows us
to bring 
 recently proposed  algorithms to the context of differentiable games \citep{zhang2019convergence,pmlr-v108-azizian20b} to
 DAL. 
Classic examples are the Extra-Gradient (EG) method \citep{korpelevich1976extragradient} and Consensus Optimization (CO) \citep{mescheder2017numerics}. 
In~\Cref{suppl_dynamics_exg} we analyze the continuous dynamics of the EG method, and show that we cannot take the learning rate of EG to be large either. Thus, we obtain a similar conclusion as \Cref{cor:gda}.
\REDONE{Then, in practice for DAL, stability for EG comes at the price of slow convergence due to the use of small learning rates.}
We experimentally show this in \Cref{fig:comparison_game_opt_algs}.
\REDONE{With respect to CO, we show in~\Cref{supp.other.proofs} that this algorithm can be interpreted in the limit as an approximation of the RK2 solver. In practice, if its additional hyper-parameter ($\gamma$) is tuned thoroughly, CO may approximate the  continuous dynamics better than GD and EG. We believe this may be the reason why CO slightly outperforms GD and EG (see~\Cref{suppl_co_vs_others}).
In all cases, RK solvers outperform GD, EG and CO. 
This is in line with our theoretical analysis since they better approximate the continuous dynamics \citep{hairer2006geometric}.
}
\CR{
It is worth noting that many other optimizers have recently been proposed in the context of games e.g., \cite{gidel2018variational,hsieh2020explore,lorraine2021complex,lorraine2021lyapunov}. Some of them are modifications of the EG method that we compared to e.g.~Extra-Adam~\citep{gidel2018variational} or double step-size EG~\citep{hsieh2020explore}. 
More practical modifications in terms of adaptive step size could also be applied on top of RK solvers as done in \cite{qin2020training}.
A comparison of all existing game optimizers in DAL,  and a better theoretical understanding of such modification on RK solvers are beyond the scope of this work. However, we believe it is  an interesting and unexplored research direction that our game perspective on DAL  enables.
 } 
 %
%
%

\vspace{-4mm}
\section{Related Work}
\vspace{-4mm}
To the best of our knowledge, DAL has not been previously analyzed from a game perspective. 
Moreover, the stability of the optimizer and the implications of introducing the GRL has not  been analyzed either.
Here, we compare our results with the general literature.

\textbf{Gradient-Based Learning in Games.}
\citet{ratliff2016characterization}  proposed a characterization of local Nash Equilibrium providing sufficient and necessary conditions for its existence. 
  \citet{mazumdar2020gradient} proposed a general framework to analyze the limiting behavior of the gradient-play algorithms in games using tools from dynamical systems. 
Our work builds on top of this characterization but specializes them to the domain-adversarial problem.
We propose  a more stable learning algorithm that better approximates the gradient-play  dynamics.
Our resulting algorithm does not introduce explicit adjustments or modify the learning dynamics, nor does it require the computation of the several Hessian vector products or new hyperparameters. This is in contrast with general algorithms previously analyzed in the context of differentiable games \citep{pmlr-v108-azizian20b,JMLR:v20:19-008}.

\textbf{Integration Methods and ML.}  \citet{scieur2017integration} showed that accelerated optimization methods 
 can be interpreted as integration schemes of the gradient flow equation.
\citet{zhang2018direct} showed that the use of high order RK integrators can achieve acceleration in convex functions.
In the context of two-players game (i.e GANs),  \citet{gemp2018global} consider using a second-order ODE integrator. 
More recently, \citet{qin2020training} proposed to combine RK solvers with regularization on the generators' gradient norm.
\citet{chen2018neural} 
interpreted the residual connection in modern networks as the Euler's integration of a continuous systems.  
In our case, we notice that the combination of GD with the GRL can be interpreted as the Euler's discretization of the continuous gradient play dynamics, which could prevent asymptotic convergence guarantees. 
We then study the discretization step of popular ODE solvers and provide simple guarantees for stability.
Moreover, our analysis is based on a novel three-player game interpretation of the domain-adaptation problem.
This is also different from a single potential function  or two-player games (i.e.~GANs).

\textbf{Two-Player Zero-Sum Games}  have recently received significant attention in the machine learning literature  due to the popularity of Generative
Adversarial Networks (GANs) \citep{goodfellow2014generative}. For example, several algorithms have been proposed and analyzed \citep{mescheder2017numerics,mertikopoulos2018optimistic,gidel2018variational,gidel2019negative, zhang2019convergence,hsieh2020explore}, in both deterministic and stochastic scenarios. In our problem, we have a general three-player games resulting of a novel game interpretation of the domain-adversarial problem. 
\CR{
  It is worth noting that while \cite{gidel2018variational} focused on GANs, their convergence theory and methods for stochastic variational inequalities could also be applied to three-players games and thus DAL using our perspective.
}
%

%

\vspace{-3mm}
\section{Experimental Results}\label{sec.exp.results}
\vspace{-3mm}

We conduct an extensive experimental analysis.
We compare with default optimizers used in domain-adversarial training such as 
GD, 
GD with Nesterov Momentum (GD-NM) (as in \citet{sutskever2013importance}) and Adam \citep{kingma2014adam}.
We also compare against recently proposed optimizers in the context of differentiable games such as EG~\citep{korpelevich1976extragradient} and CO~\citep{mescheder2017numerics}.
We focus our experimental analysis on the original domain-adversarial framework of \citet{ganin2016domain} (DANN).
However, in~\cref{exp.visda.section}, we also show the versatility and efficacy of our approach  improving the performance of recently proposed SoTA DAL framework (e.g., $f$-DAL \citep{fdomainadversarial} combined with Implicit Alignment \citep{pmlr-v119-jiang20d_ImplicitAlign}). 

\begin{wrapfigure}{r}[0.cm]{0.5\textwidth}
\vspace{-0.8cm}
\begin{minipage}{0.49\textwidth}
  \centering
    \begin{tikzpicture}
    \node(dyna1){\includegraphics[width=.48\textwidth,trim=5 5 5 0, clip]{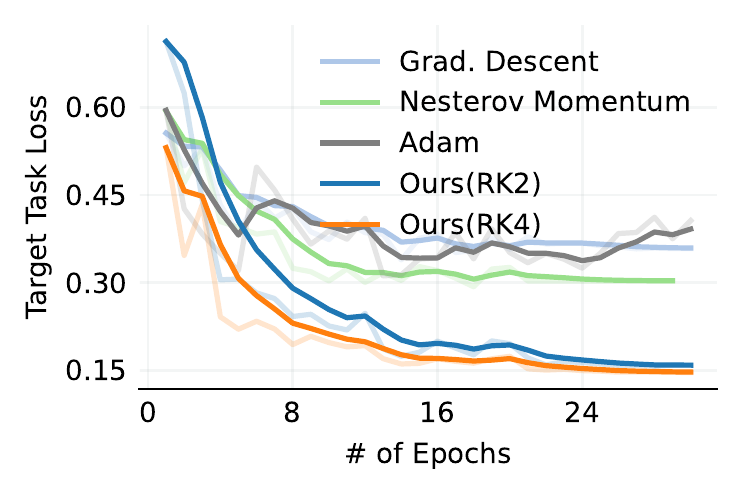}};
    \node(dyna2)[right=of dyna1,xshift=-1.0cm]{
    \includegraphics[width=.48\textwidth,trim=5 5 5 0, clip]{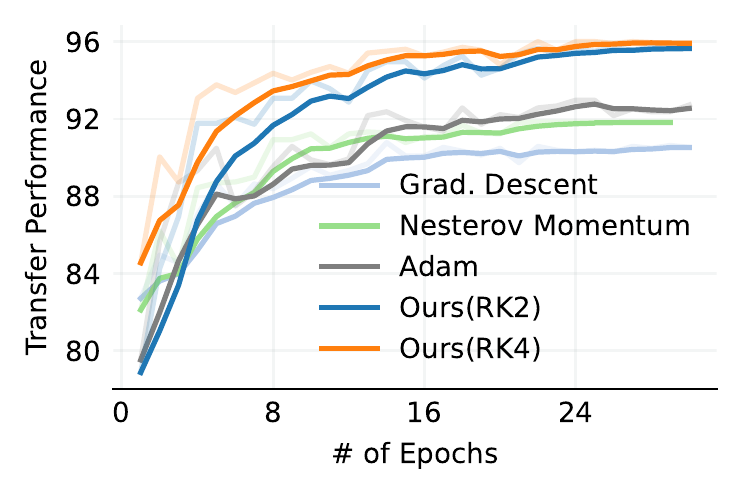}};
    \end{tikzpicture}
    \begin{tikzpicture}
    \node(featvis){\includegraphics[width=\textwidth]{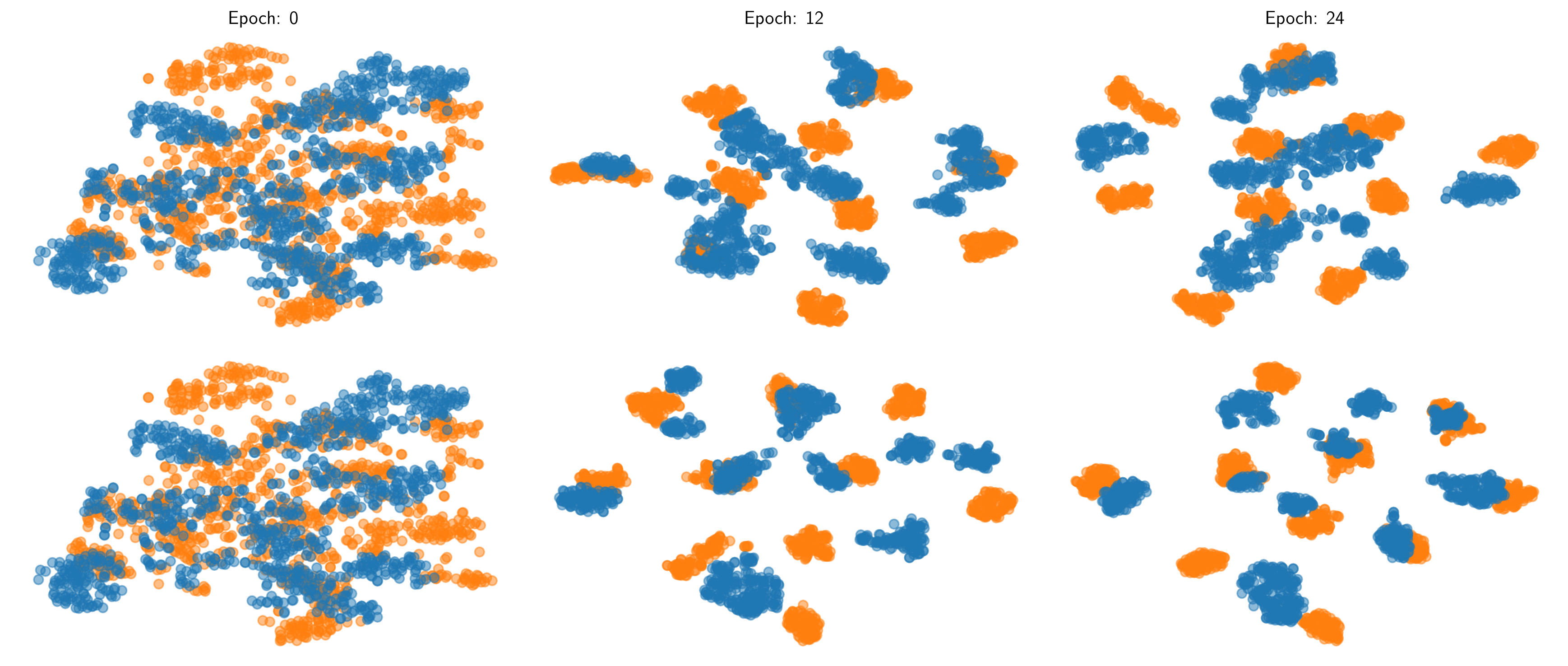} };
    \node(text1)[left=of featvis, node distance=-5cm, rotate=90, xshift=+1.65cm, yshift=-1.cm, font={\tiny,\color{black}}] {Gradient Descent};
    \node[left=of featvis, node distance=-5cm, rotate=90, xshift=-0.35cm, yshift=-1.cm, font={\tiny,\color{black}}] { Ours (RK2)};
    \end{tikzpicture}
    \vspace{-8mm}
    \caption{\small Our method vs popular optimizers on the Digits Benchmark. (\textbf{Top-Left}) Loss in target domain. (\textbf{Top-Right}) Transfer performance. (\textbf{Bottom}) $t$-SNE Visualization of the last layer representations during training. Our method converges faster, has better performance and produces more aligned features faster.} %
    \label{ fig:comparison_vs_gd}  
    \vspace{-3mm}
\end{minipage}
\vspace{-0.0cm}
\end{wrapfigure}
\vspace{-0.3em}
\subsection{Experimental Analysis on Digits} 
\vspace{-0.3em}

\textbf{Implementation Details.} Our first experimental analysis is based 
on the digits benchmark with models trained from scratch (i.e., with random initialization). 
This  benchmark constitutes of two digits datasets \textbf{MNIST} (CC BY-SA 3.0) and \textbf{USPS} \citep{lecun1998gradient,long2018conditional}
with two transfer tasks (M $\to$ U and U $\to$ M).
We adopt the splits and evaluation protocol from \citet{long2018conditional} 
and follow their standard implementation details.

\begin{wrapfigure}{r}[0.cm]{0.35\textwidth}
\vspace{-0.4cm}
\begin{minipage}{0.34\textwidth}
\includegraphics[width=\textwidth,trim=7 7 7 7, clip] {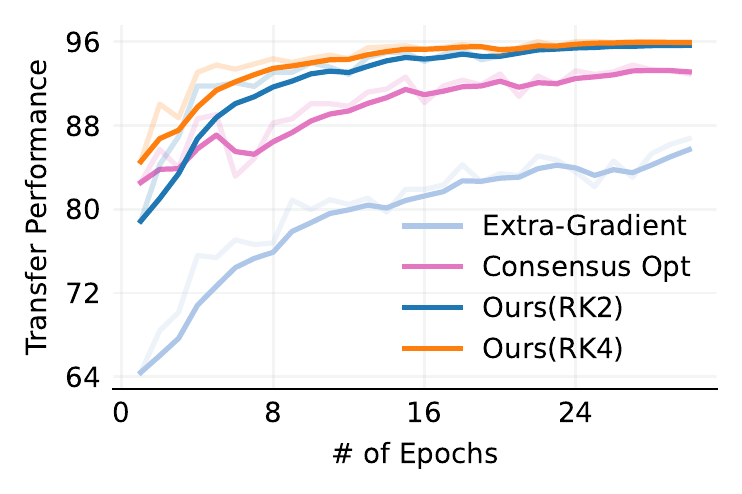}
\vspace{-6mm}
\caption{\small Comparison among optimization algorithms (M$\to $U) with DANN.}
\label{fig:comparison_game_opt_algs}
\vspace{-0.45 cm}

\end{minipage}
\end{wrapfigure}

For GD-NM, we use the default momentum value (0.9). We follow the same approach for the additional hyper-parameters of Adam. 
Hyper-parameters such as learning rate, learning schedule and adaptation coefficient ($\lambda$) are determined for all optimizers by running a dense grid search and selecting the best hyper-parameters on the transfer task M$\to$U.
As usual in UDA, the best criteria are determined based on best transfer accuracy.
The same parameters are then used for the other task (i.e., U$\to$M). We use the same seed and identically initialize the network weights for all optimizers.
This analysis is conducted on Jax \citep{jax2018github} (see \Cref{supp.details.experimental.setup}).

\textbf{Comparison vs optimizers used in DAL.} Figure \ref{ fig:comparison_vs_gd} (top) illustrates the training dynamics for the loss in the target domain and the performance transfer. 
As expected, our optimizer converges faster and achieves noticeable performance gains. 
A core idea of DAL is to learn domain-invariant representations, thus we plot in Figure~\ref{ fig:comparison_vs_gd} (bottom) $t$-SNE~\citep{van2008visualizing} visualizations of the last layer features of the network. We show this over a sequence of epochs for GD with GRL vs RK2. A different color is used for the source and target datasets. 
In the comparison vs Adam, we emphasize that Adam computes adaptive learning rates which our method does not.
 That said, Figure~\ref{ fig:comparison_vs_gd} shows that our two methods RK2 and RK4 outperform all baselines in terms of both convergence and transfer performance. 
 In~\Cref{fig:stability_analysis}, we show how unstable these standard optimizers are when more aggressive step sizes are used. This is in line with our theoretical analysis.  Experimentally, it can be seen that in DAL, GD is more stable than GD-NM and Adam, with the latter being the most unstable. This sheds lights on why well tuned GD-NM is often preferred over Adam in DAL.

\textbf{Comparison vs game optimization algorithms.} We now compare RK solvers vs other recently proposed game optimization algorithms.
Specifically, we compare vs the EG method \citep{korpelevich1976extragradient} and CO \citep{mescheder2017numerics}. 
In every case, we perform a dense grid under the same budget for all the optimizer and report the best selection (see \Cref{supp.additional.experiments} for details).
In line with our theoretical analysis of the continuous dynamics of the EG, we notice that the EG method is not able to train with learning rates bigger than 0.006, as a result it performs signficantly worse than any other optimizer (including simple GD).
Also inline with our theoretical analysis, CO performs better than EG and all other popular gradient algorithms used in DAL. This is because CO can be seen as an approximation of Heun’s Method (RK2). 
More details on this in supplementary.

\begin{figure*}[ht]
    \vspace{-0.3cm}
    \centering
    \begin{minipage}{.68\textwidth}
    \includegraphics[width=\textwidth,trim=5 5 0 0, clip]{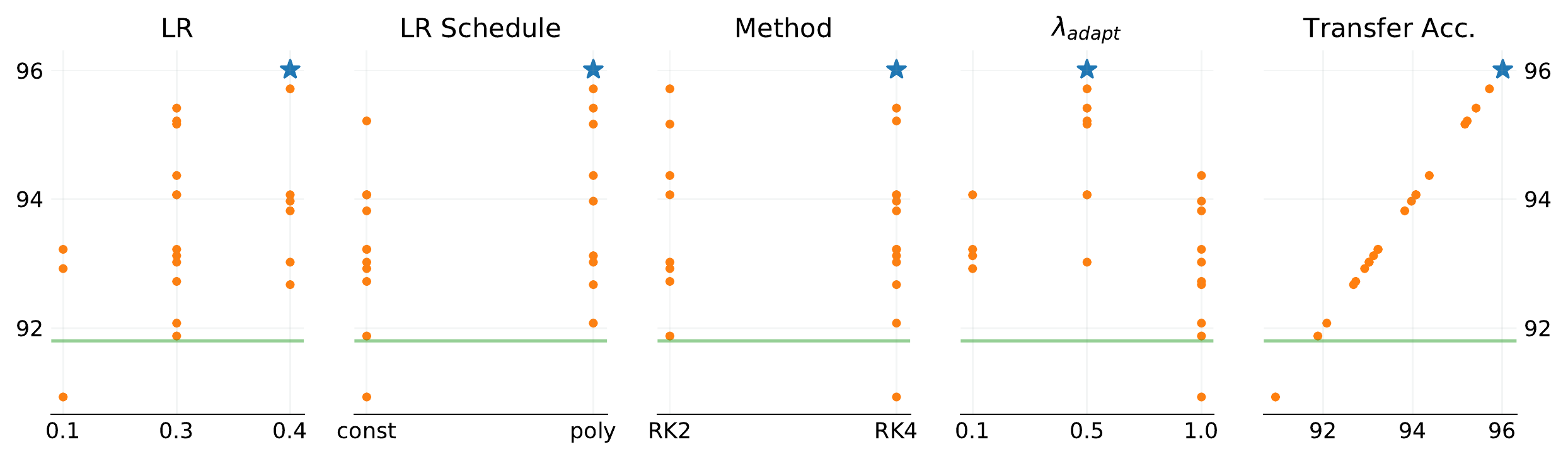}
    \vspace{-6mm}
    \captionof{figure}{\small\textit{Robustness to hyperparameters}. We compare the transfer performance of our method for different hyperarameters in the task M$\to$ U in the Digits benchmark. Green line shows the best score for the best performing hyperparameters of GD. Blue star corresponds to the best solution. Our method performs well for a wide variety of hyperparameters.} 
    \label{fig:training_hyperparams}
  \end{minipage}
  \hfil
  \begin{minipage}{.28\textwidth}
    \includegraphics[width=\textwidth,trim=0 8 20 30, clip]{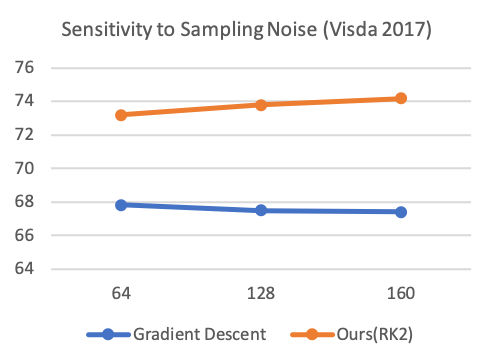}

    \vspace{-2mm}
    \caption{\small \textit{Sensitivity to Sampling Noise} . Different amounts of sampling noise controlled by the batch size (64, 128, 160) (Visda).
}
    \label{fig:robutness_to_sampling_noise}
    \vspace{-2mm}
  \end{minipage}
    %
    %
    %
    %
    
    %
    \vspace{-0.4 cm }
\end{figure*}

%


%
  
%

%
\textbf{Robustness to hyper-parameters.}
\Cref{fig:training_hyperparams} shows  transfer performance of our method for different choices of hyper-parameters while highlighting (green line) the best score of the best performing GD hyperparameters on the same dataset. Our method is robust to a wide variety of hyperparameters.
\vspace{-3mm}
\subsection{Comparison in complex adaptation tasks}\label{exp.visda.section}
\vspace{-4.0mm}

\begin{wrapfigure}{r}[0.cm]{0.30\textwidth}
\begin{minipage}{0.30\textwidth}
\vspace{-0.6 cm}
    \centering
    \resizebox{\textwidth}{!}{  
      \begin{tabular}{cc} 
    Method & Sim$\to$Real \\ 
    \toprule
    GD-NM & 71.7 $\pm$ 0.7 \\
    \midrule
    Ours(RK2) & \textbf{73.8} $\pm$  0.3 \\ 
    \midrule
    \bottomrule
    \end{tabular}%
    }
    \captionof{table}{\small Accuracy (DANN) on Visda 2017 with ResNet-50.}  \label{tab:results_visda}%
  \end{minipage}
  \vspace{-3 mm}
\end{wrapfigure}

We evaluate the performance of our algorithm with Resnet-50 \citep{he2016deep} on more challenging adaptation benchmarks.
Specifically, this analysis is conducted on Visda-2017 benchmark  \citep{peng2017visda}.
This is a simulation-to-real dataset with two different domains: (S) synthetic renderings of 3D models and (R) real images. For this experiment, we use PyTorch \citep{NEURIPS2019_9015}, our evaluation protocol follows \citet{zhang19bridging} 
 and uses ResNet-50 as the backbone network.   
For the optimizer parameters, we tune thoroughly GD-NM, which is the optimizer used in this setting \citep{long2018conditional,zhang19bridging, pmlr-v119-jiang20d_ImplicitAlign,fdomainadversarial}. 
For ours, we keep the hyper-parameters, but increase the learning rate (0.2), and the batch size to 128. 
In this task, our approach corresponds to the improved Euler's method (RK2).
Table~\ref{tab:results_visda} shows the comparison. Figure~\ref{fig:performance_visda}  compares the training dynamics of our method vs GD-NM. In \Cref{fig:robutness_to_sampling_noise}, we evaluate the sensitivity of our method (in terms of transfer performance) to sampling noise as controlled by the batch size.

\begin{wrapfigure}{r}[0.cm]{0.45\textwidth}
    \vspace{-0.45cm}
\begin{minipage}{0.45\textwidth}
  \centering
     
     \resizebox{\textwidth}{!}{  
     \addtolength{\tabcolsep}{-4pt}
    \begin{tabular}{l|c}
        \toprule
    Method & Sim$\to$Real \\
    \toprule
    \centering
    $f$-DAL - GD-NM   & 72.9 (29.5K iter) \\
    $f$-DAL - RK2 \textbf{(Ours)}   & \textbf{76.4} (\underline{10.5K} iter)\\
    \bottomrule
    \end{tabular}%
}
\vspace{-3mm}
\captionof{table}{\small Comparison using SoTA DA adversarial frameworks with ResNet-50 on Visda.}

  \label{tab:tbl_fdal_visda}%
\end{minipage}
\vspace{-3 mm}
\end{wrapfigure}
\vspace{-1mm}
\textbf{Improving SoTA DAL frameworks.} We use this complex visual adaptation task to showcase the applicability of our method to SoTA DAL frameworks. 
Specifically, we let the DA method being $f$-DAL Pearson as in \cite{fdomainadversarial} with Implicit Alignment \cite{pmlr-v119-jiang20d_ImplicitAlign}. 
We use the tuned parameters and optimizer from \cite{fdomainadversarial,pmlr-v119-jiang20d_ImplicitAlign} as the baseline. 
In our case, we 
only increase the learning rate (0.2). 
Table~\ref{tab:tbl_fdal_visda} shows that 
our method achieves peak results (\textbf{+3.5\%}) in \textbf{10.5K} iterations (vs \textbf{29.5K} iterations for GD-NM). %
\begin{figure}[t]
\begin{minipage}{0.34\textwidth}
  \centering
    \resizebox{\textwidth}{!}{  
    \begin{tabular}{cccc}
    Method & M$\to$U  & U$\to$M  & Avg   \\
    \toprule
    GD   & 90.0 $\pm$ 0.4  & 93.4 $\pm$ 0.7  & 91.7   \\
    Adam  & 92.8 $\pm$ 0.3 & 96.8 $\pm$ 0.2 & 94.8  \\
    GD-NM & 91.8 $\pm$ 0.3  & 94.4 $\pm$ 0.4   & 93.1   \\
    \midrule
    Ours(RK2) & \textbf{95.1} $\pm$ 0.1  & \textbf{97.5} $\pm$ 0.2 & \textbf{96.3}  \\
    Ours(RK4) & \textbf{95.0} $\pm$ 1.4 & 97.3 $\pm$ 0.1  & {96.1}   \\
    \midrule
    \bottomrule
    \end{tabular}%
    }
    \captionof{table}{Accuracy (\%) on Digits (DANN). }\label{tab:acc_digits}%
  
    \vspace{-4mm}
   
\end{minipage}
  \hspace{1mm}
\begin{minipage}{0.27\textwidth}
    \centering
    \includegraphics[width=\textwidth,trim=7 7 10 10, clip]{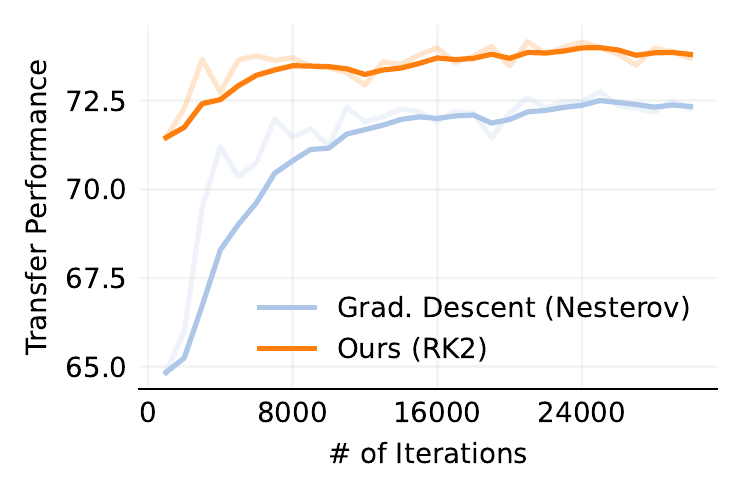}
    \vspace{-6mm}
    \captionof{figure}{\small Transfer Performance on Visda (DANN).}
    \label{fig:performance_visda}
  
\end{minipage}
\hspace{1mm}
  \begin{minipage}{0.360\textwidth}
   \vspace{-4mm}
        \centering
    \includegraphics[width=0.8\textwidth,trim=5 5 0 0, clip]{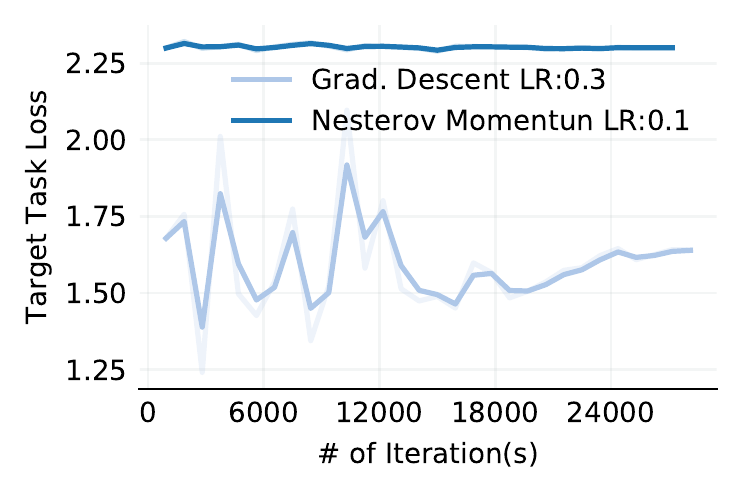}
    \vspace{-3.5mm}
    \captionof{figure}{\footnotesize\textit{Stability anal. on Digits}. Most aggressive step size %
    before divergence. Adam diverges for  $\eta> 0.001$. 
    }
    \label{fig:stability_analysis}
    \vspace{-6mm}
  
  \end{minipage}
  \vspace{-4mm}
\end{figure}
%

%

 %
%


%


%
%
%
\vspace{-1mm}
\textbf{Natural Language Processing Tasks.} We also evaluate our approach on natural language processing tasks on the Amazon product reviews dataset \citep{blitzer2006domain}. We show noticeable gains by replacing the GD with either RK2 or RK4.  \CR{Results and details can be found in \Cref{supp.nlp.task}}.

\vspace{-4mm}
\section{Conclusions}
\vspace{-4mm}
We  analyzed DAL from a game-theoretical perspective where optimality is defined as local NE. From this view, we showed that standard optimizers in DAL can 
violate the  
asymptotic guarantees of the gradient-play dynamics, requiring careful tuning and small learning rates.
Based on our analysis, we proposed to replace existing optimizers with higher-order ODE solvers. We showed both theoretically and experimentally that these are more stable and allow for higher learning rates, leading to  noticeable improvements in terms of the transfer performance and the number of training iterations. We showed that these ODE solvers 
 can be used as a drop-in replacement 
and outperformed strong baselines.

\textbf{Acknowledgements.} We would like to thank James Lucas, Jonathan Lorraine, Tianshi Cao, Rafid Mahmood, Mark Brophy and the anonymous reviewers for feedback on earlier versions of this work.

%
%


\FloatBarrier
\small
\bibliographystyle{iclr2022_conference}
\bibliography{gameopt,domain-adapt}
\FloatBarrier

\appendix

%

\newpage 

\onecolumn

 \section*{Supplementary Material}
 	\tableofcontents
\appendix
 
\section{Concepts in Game Theory} \label{supp.game.theory.concepts}

\subsection{ Definitions}\label{supp.sec.definitions}

\begin{definition} \label{def:local_ne}
\textbf{(Local Nash Equilibrium) :} A point $(w_i^*,w_{-i}^*) \in \Omega$ is said to be a local Nash Equilibrium of the domain-adversarial game if there exists some $\delta>0$ such that:
\begin{equation}
\forall i \in \{ 1,2,3 \}, ~~ \pcost_i(w_i^*,w_{-i}^*) \leq \pcost_i(w_i,w_{-i}^*),  \quad  \textnormal{ s.t. }  || \omega_i-\omega_i^* ||_2<\delta  
\end{equation}
\end{definition}

%
%
%
Intuitively, this is restricting the concept of NE to a local neighborhood $\mathcal{B}(x^*,\delta):=\{ ||x-x^*||_2 < \delta \}$ with $\delta>0$. 

A  more practical characterization of the NE can be given in terms of the Best Response Map of each player which we now define. 
\begin{definition}\label{prop:br_defi}
\textbf{(Best Response Map (BR))} The best response map $\brmap _i: \Omega_{-i} \rightrightarrows \Omega_{i} $ of player $i$ is defined as:
\begin{equation}
\brmap_i(\omega_{-i}):= \argmin_{\omega_i \in \Omega_i} \pcost_i(\omega_i, \omega_{-i} ),
\end{equation}
\end{definition}
here the symbol $ \rightrightarrows$ emphasizes that the best response map is generally a set map and not a singleton, thus it is not a function in general. In other words, there may be a subset of element in $\Omega_i $ for which $\pcost_i(.,\omega_{-i})$ is a minimum. 

The notion of NE can be defined in terms of the \textit{generalized} $\brmap:\Omega \rightrightarrows \Omega $ map. This can be thought as an stacked vector where the $i$-th element of \brmap~ is $\brmap_i(\omega_{-i})$.

\begin{prop}\label{prop:ne_best_response}
A point $w_i^* \in \Omega$ is said to be a NE of the game if it is a fixed point of the generalized $\brmap:\Omega \rightrightarrows \Omega $ map. That is,
\begin{equation}
\omega^* \in \brmap(\omega^*) \implies \forall i \in \{ 1,2,3 \}, \quad \omega_i^* \in \brmap_i(\omega_{-i}^*)  
\end{equation}

\end{prop}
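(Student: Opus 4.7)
The plan is to prove this as a straightforward unpacking of Definitions~\ref{eqn:ne_def_1} and~\ref{prop:br_defi}, since this proposition is really the statement that Nash equilibria are exactly the fixed points of the (generalized) best-response correspondence. I would therefore prove the equivalence $\omega^* \text{ is a NE} \iff \omega^* \in \brmap(\omega^*)$ by showing both implications; the stated implication in the proposition falls out as one direction.

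First, for the forward direction, I would assume that $\omega^*$ is a NE in the sense of Definition~\ref{eqn:ne_def_1}. Fix any $i \in \{1,2,3\}$. By the NE inequality, $\pcost_i(\omega_i^*, \omega_{-i}^*) \leq \pcost_i(\omega_i, \omega_{-i}^*)$ for every $\omega_i \in \Omega_i$, which is exactly the statement that $\omega_i^*$ is a minimizer of the map $\omega_i \mapsto \pcost_i(\omega_i, \omega_{-i}^*)$ over $\Omega_i$. By Definition~\ref{prop:br_defi}, the set of such minimizers is precisely $\brmap_i(\omega_{-i}^*)$, so $\omega_i^* \in \brmap_i(\omega_{-i}^*)$. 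Stacking over $i$ yields $\omega^* \in \brmap(\omega^*)$, which establishes the direction actually written in the proposition's display equation.

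For the reverse direction (which is implicit in the phrase ``is said to be a NE \emph{if} it is a fixed point'' and is useful to make the characterization an equivalence), I would assume $\omega^* \in \brmap(\omega^*)$, i.e.\ $\omega_i^* \in \brmap_i(\omega_{-i}^*)$ for every $i$. By Definition~\ref{prop:br_defi} this means $\omega_i^* \in \argmin_{\omega_i \in \Omega_i} \pcost_i(\omega_i, \omega_{-i}^*)$, which gives $\pcost_i(\omega_i^*, \omega_{-i}^*) \leq \pcost_i(\omega_i, \omega_{-i}^*)$ for all $\omega_i \in \Omega_i$. Since this holds for every $i$, the point $\omega^*$ satisfies Definition~\ref{eqn:ne_def_1} and is a NE.

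There is no real obstacle here: the proposition is essentially a restatement of definitions, and no regularity, convexity, or differentiability of $J_i$ is needed. The only mild subtlety is that $\brmap_i$ is a set-valued correspondence rather than a function, which is why the conclusion is written with set membership ($\omega_i^* \in \brmap_i(\omega_{-i}^*)$) rather than equality; one should simply take care to use $\in$ throughout and to note that $\brmap_i(\omega_{-i}^*)$ may be empty in general, but non-emptiness follows from the NE assumption itself since $\omega_i^*$ certifies a minimizer. A brief remark that the argument does not require $\pcost_i$ to be smooth or convex (and hence applies to the non-convex neural-network setting used throughout the paper) would be worth including.
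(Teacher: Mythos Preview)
Your proposal is correct and matches the paper's approach: the paper's proof is a single sentence stating that the result follows directly from the definitions of the BR map and NE, and your argument is precisely the detailed unpacking of that sentence. Your version is in fact more thorough, since you spell out both implications and note the set-valued nature of $\brmap_i$, whereas the paper leaves all of this implicit.
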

\begin{proof}
	This follows from the definitions of BR map and NE.  
\end{proof}

\begin{definition}
   \label{def:assym_stable}
(\textbf{Asymptotically Stable}) A point $\omega$ is said to be a locally asymptotically stable point of the continuous dynamics $\dot \omega = f(\omega)$ if $Re(\lambda)<0$ for all $\lambda \in Sp(\nabla f(\omega))$, where $Sp(\nabla f(\omega))$ is the spectrum of $\nabla f(\omega)$.
\end{definition}

\Cref{def:assym_stable} is also known as the Hurwitz condition \cite{khalil2002nonlinear}.  
\begin{definition}
	\label{definition_saddle}
A stationary point $x$ of a $C^2$ function $\phi: \R^n \to \R$ is said to be a strict saddle point if:
\begin{itemize}
\item $\lambda_{\min}(\nabla^2_{xx} \phi(x^*))<0$  and,
\item $\lambda(\nabla^2_{xx} \phi(x^*))>0$, for any other $\lambda \in   \textrm{sp}(\nabla^2_{xx} \phi(x^*))$
\end{itemize}

\end{definition}

\subsection{Games Characterizations} \label{suppl:game_characterization}

\textbf{Potential Games.} Potential Games were introduced in  \cite{monderer1996potential} and can be defined as a type of game for which there exists an implicit \textit{potential} function $\phi:\Omega \to R$ such that $\nabla \phi(\omega)= \vectorfield$. 
\REDONE{
Consequently, a necessary and sufficient condition for
the game to be potential is the Jacobian of the vector field  $\nabla \vectorfield$ being symmetric (see 3.3 in \cite{mazumdar2020gradient} and \cite{monderer1996potential}).
}

\textbf{Purely Adversarial Games.} This particular type of game refers to the other extreme in which $\hessiangame$ is a non-symmetric matrix with purely imaginary eigenvalues.
If the game Hessian is skew-symmetric these have also been called Hamiltonian Games \cite{JMLR:v20:19-008}.
%
%


\subsection{Case of Study in DANN.   Original Formulation from \citet{ganin2016domain}}\label{sup.original.dann.formulation}

As mentioned in the main text~(\Cref{sec.preliminaries}), our analysis is compatible with both the original and  more recent formulation of domain-adversarial training such as \cite{zhang19bridging,fdomainadversarial}.
In this section, we specifically derive additional results for DANN \cite{ganin2016domain}.


In order to obtain the original formulation of DANN, let us define $\hat{\ell}(\_,b)=\log(\sigmoid(b))$  and $\phi^*(t)=-\log(1-e^t)$ in~\Cref{eqn:min_max_traditional}. This corresponds to the Jensen-Shannon divergence (JS) (up to a constant shift that does not affect optimization).  
%
%
%
%
%
%
We can then rewrite $d_{s,t}$ as:

\begin{equation}
d_{s,t} =  \E_{x \sim \psdensity }  \log \sigmoid \circ \hat h'\circ g (x)  +  \E_{x  \sim \ptdensity }\log(1- \sigmoid \circ \hat h'\circ g (x))
\end{equation}

where $\sigmoid(x):=\frac{1}{1+e^{-x}}$. To simplify the notation, we write $\hypot:= \sigmoid \circ \hypot$. 

We can now re-define the pseudo-gradient $v(w)$ of the game as the gradient of each player loss with respect to its parameters.
Letting $\alpha=1$, we get from \Eqref{eqn:3_players_game_objs}.
\begin{equation}
\vectorfield:=(\nabla_{\omega_1} \ell ,\nabla_{\omega_2}(\ell + \lambda d_{s,t}), -\nabla_{\omega_3}d_{s,t}) \in \R^d.
\end{equation}

The following propositions characterize local NE in terms of the pseudo-gradient $v(w)$ and its Jacobian $\hessiangame$. 

\begin{prop}\label{prop:local_ne_jacob_dynamics}
\textbf{(Local NE)} 
Suppose $v(w)=0$ and 
\begin{equation}
    \begin{pmatrix}
\nabla^2_{\omega_1} \ell && \nabla^2_{\omega_1,\omega_2} \ell  \\
\nabla^2_{\omega_1,\omega_2} \ell &&  \nabla^2_{\omega_2} (\ell + \lambda d_{s,t})  
\end{pmatrix} \succ 0, \, \nabla^2_{\omega_3}d_{s,t} \prec 0,
\end{equation}then $w$ is an isolated local NE. 
\end{prop}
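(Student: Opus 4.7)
The plan is to conclude the local-NE part from \Cref{prop:sufficient_condition} and to obtain isolation from invertibility of the game Jacobian $\nabla v(\omega)$ via the inverse function theorem.

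First I would verify the two bullets of \Cref{prop:sufficient_condition}. Given $v(\omega)=0$ and the player costs $J_1=\ell$, $J_2=\ell+\lambda d_{s,t}$, $J_3=-d_{s,t}$, the first-order condition $\nabla_{\omega_i}J_i(\omega)=0$ is immediate. For the diagonal-Hessian condition, any principal submatrix of a PD matrix is itself PD; applied to the $2\times 2$-block hypothesis this gives $\nabla^2_{\omega_1,\omega_1}J_1=\nabla^2_{\omega_1}\ell\succ 0$ and $\nabla^2_{\omega_2,\omega_2}J_2=\nabla^2_{\omega_2}(\ell+\lambda d_{s,t})\succ 0$. The second hypothesis gives $\nabla^2_{\omega_3,\omega_3}J_3=-\nabla^2_{\omega_3}d_{s,t}\succ 0$. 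Hence \Cref{prop:sufficient_condition} applies and $\omega$ is a local NE.

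For isolation I would show that $H:=\nabla v(\omega)$ is nonsingular; then the inverse function theorem makes $v$ a local diffeomorphism at $\omega$, so $\omega$ is the unique zero of $v$ in some neighborhood, and the necessary first-order condition (\Cref{prop:necessary_condition}) rules out any other local NE nearby. A direct computation, using that $\ell$ does not depend on $\omega_3$, yields the block form
\[
H=\begin{pmatrix} A & B & 0 \\ B^{\top} & C & \lambda D \\ 0 & -D^{\top} & E \end{pmatrix},
\]
with $A=\nabla^2_{\omega_1}\ell$, $B=\nabla^2_{\omega_1,\omega_2}\ell$, $C=\nabla^2_{\omega_2}(\ell+\lambda d_{s,t})$, $D=\nabla^2_{\omega_2,\omega_3}d_{s,t}$, $E=-\nabla^2_{\omega_3}d_{s,t}$. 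Setting $M=\begin{pmatrix}A & B \\ B^{\top} & C\end{pmatrix}\succ 0$ (hypothesis), its inverse's lower-right block $R$ is PD as a principal submatrix of $M^{-1}\succ 0$. The Schur complement of $M$ in $H$ then simplifies to
\[
E-\begin{pmatrix}0 & -D^{\top}\end{pmatrix}M^{-1}\begin{pmatrix}0 \\ \lambda D\end{pmatrix}=E+\lambda\,D^{\top}RD,
\]
which is PD (hence invertible) for the standard $\lambda\ge 0$, since $E\succ 0$ and $\lambda D^{\top}RD\succeq 0$. Therefore $H$ is nonsingular, and $\omega$ is isolated.

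The main technical point is that $H$ is inherently non-symmetric (the $(2,3)$-block is $\lambda D$ while the $(3,2)$-block is $-D^{\top}$) and the coupling $D$ is not sign-constrained, so invertibility cannot be argued by PD-ness of the symmetric part alone, as in \Cref{prop:local_ne_general}. The Schur-complement manipulation above sidesteps this by exploiting PD-ness of $R$, which is inherited from the PD hypothesis on the $(\omega_1,\omega_2)$-block, rather than PD-ness of $H$ itself.
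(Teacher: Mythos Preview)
Your argument is correct. For the local-NE part you invoke \Cref{prop:sufficient_condition} exactly as the paper does. The difference lies in the isolation step: the paper's one-line sketch appeals to the condition $H(\omega)+H(\omega)^\top\succ 0$ of \Cref{prop:local_ne_general}, whereas you establish invertibility of $H=\nabla v(\omega)$ directly via a Schur-complement computation and then conclude with the inverse function theorem together with \Cref{prop:necessary_condition}. From your block form one obtains
\[
H+H^\top=\begin{pmatrix} 2A & 2B & 0 \\ 2B^\top & 2C & (\lambda-1)D \\ 0 & (\lambda-1)D^\top & 2E\end{pmatrix},
\]
which is positive definite under the stated hypotheses when $\lambda=1$ (the matrix becomes block-diagonal in $2M$ and $2E$), but not necessarily for general $\lambda$ with unconstrained coupling $D$. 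Your Schur-complement route, by contrast, gives $\det H\neq 0$ for every $\lambda\ge 0$ without any extra control on $D$. So the paper's route is shorter but implicitly leans on $\lambda=1$ (or on $(\lambda-1)D$ being small relative to $M$ and $E$), while your argument is slightly longer yet cleanly covers all $\lambda\ge 0$, making explicit why the non-symmetry of $H$ is harmless here---precisely the point you flag in your final paragraph.
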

The proof is simple and follows from Propositions~\ref{prop:necessary_condition} and~\ref{prop:sufficient_condition}, the definition of the vector field $\vectorfield$ 
and 
the condition $H + H^\top \succ 0$. 

\textbf{Cooperation with Competition}. By examining the matrix $\hessiangame$, one can see that, in our scenario, the game is neither a potential game nor a purely adversarial game. 
However, we can write the vector field in the following form:
\begin{equation}\label{eqn:cooperation_competition_vector_field}
v(w)=
\underbrace{\begin{pmatrix}
\nabla_{\omega_1} \ell \\
\nabla_{\omega_2} \ell  \\
0
\end{pmatrix}}_{\nabla \phi(\omega)} +
\underbrace{\begin{pmatrix}
0 \\
\lambda\nabla_{\omega_2} \dst \\
-\nabla_{\omega_3} \dst \\
\end{pmatrix}}_{\hat v(\omega)}
\end{equation}

where the first part corresponds to the gradient of the potential function $\phi(\omega)=\ell(\omega_1,\omega_2)$.
The second part, on the other hand, corresponds to a function $\hat v(w)$ whose Jacobian is a non-symmetric matrix. 
Analyzing them separately leads to either a potential or an adversarial game respectively.  
We define this particular type of game as \textbf{cooperation} (i.e., in the potential term)  with \textbf{competition} (i.e., the  adversarial term). 
It is worth noting that, while the spectrum of the game Hessian for the first term has only real eigenvalues, the second term can have complex eigenvalues with a large imaginary component. 
Indeed,  it can be shown that this second term approximates the one obtained for a GAN using the non-saturating loss proposed by \citet{goodfellow2014generative} (e.g. $\lambda=1$). 
In other words, the second term can be written as the pseudo-gradient of the two player zero-sum game $ \min_{\omega_2}\max_{\omega_3} \dst $. 
Building on this key observation and the work of \citet{mescheder2017numerics, berard2019closer} (Figure 4), where it was experimentally shown that the eigenvalues of the game Hessian for GANs have indeed a large imaginary component around stationary points, we can assume that the spectrum of the game Hessian in our case also have eigenvalues with a large imaginary component around the stationary points. 
This observation can also be used with \Cref{cor:gda} to further motivate the use of higher-order ODE solvers instead of GD with the GRL.

\begin{eg}\label{sup.eg.lr.gd}
Consider the three-player game \Eqref{eqn:cooperation_competition_vector_field} 
where $\ell(w_1, w_2) = w_1^2 + 2 w_1 w_2 + w_2^2$, $\lambda = 1$ and $d_{s,t}(w_2, w_3) = w_2^2 + 99 w_2 w_3 - w_3^2$. The gradient play dynamics $\dot{w} = -v(w)$ becomes: 
\be
\dot{w} = A w = \left(
\begin{array}{ccc}
 -2 & -2 & 0 \\
 -2 & -4 & -99 \\
 0 & 99 & -2 \\
\end{array}
\right) \nonumber w.
\en
The eigenvalues of $A$ are $-2$ and $-3\pm 2i \sqrt{2449}$. From~\Cref{cor:gda}, 
$\eta$ should  
be $0 < \eta < 
6.2\cdot10^{-3}$. 
\end{eg}

\textbf{Is the three-player game formulation desired?}  In domain adaptation, 
optimization is \textit{a means to an end}. 
The final goal is to minimize the upper bound from~\Cref{thm:general_bound} to ensure better performance in the target domain. 
%
%
%
%
%
%
%
One might then wonder whether interpreting optimality in terms of NE is desirable. 
In our context, NE means finding the optimal $g^*,\hat h^*$ and $\hat h^{'*}$ of the cost functions defined in \Eqref{eqn:3_players_game_objs}. 
This in turns leads to minimizing the upper bound in \Cref{thm:general_bound}. 

\textbf{Remark on sequential games}: Recently, \citet{jin2020local} introduced a notion of local min-max optimality for two-player's game exploiting the sequential nature of some problems in adversarial ML (i.e GANs). In domain-adversarial learning, updates are usually performed in a simultaneous manner using the GRL.
Thus, we focus here on the general case where the order of the players is not known.

\section{Derivation of high-resolution ODEs}\label{suppl:derivation_of_high_res_odes}

\begin{lem}
The high resolution ODE of resulting of the GD algorithm with the GRL is:
\be
\dot{w} = -v(w) - \frac{\eta}{2} \nabla v(w) v(w) + O(\eta^2),
\en
\end{lem}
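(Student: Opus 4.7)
The plan is to carry out the standard backward error analysis (as in Hairer's geometric integration text, cited in the paper) specialized to the one-step Euler map $w^+ = w - \eta v(w)$, which the paper has already identified as GD with the GRL. The goal is to construct a modified vector field $f(w,\eta) = f_0(w) + \eta f_1(w) + O(\eta^2)$ such that the exact flow of $\dot w = f(w,\eta)$ advanced by a time step $\eta$ reproduces the Euler update up to the desired order, and then read off $f_0$ and $f_1$.

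First I would posit the ansatz $\dot w = f_0(w) + \eta f_1(w) + O(\eta^2)$ and Taylor expand the exact solution of this ODE about $t$:
\begin{equation*}
w(t+\eta) = w(t) + \eta \dot w(t) + \tfrac{\eta^2}{2}\ddot w(t) + O(\eta^3).
\end{equation*}
Using the ansatz, $\dot w = f_0 + \eta f_1 + O(\eta^2)$ and, by the chain rule, $\ddot w = \nabla f_0(w)\,\dot w + O(\eta) = \nabla f_0(w)\,f_0(w) + O(\eta)$. Substituting gives
\begin{equation*}
w(t+\eta) = w(t) + \eta\bigl(f_0 + \eta f_1\bigr) + \tfrac{\eta^2}{2}\nabla f_0(w)\,f_0(w) + O(\eta^3).
\end{equation*}

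Next I would equate this with the Euler step $w^+ - w = -\eta\, v(w)$ and match powers of $\eta$. At order $\eta$ this forces $f_0(w) = -v(w)$. At order $\eta^2$ it yields
\begin{equation*}
f_1(w) + \tfrac{1}{2}\nabla f_0(w)\,f_0(w) = 0,
\end{equation*}
and substituting $f_0 = -v$ gives $f_1(w) = -\tfrac{1}{2}\nabla v(w)\,v(w)$ (the two sign flips cancel). Collecting terms produces the claimed expansion
\begin{equation*}
\dot w = -v(w) - \tfrac{\eta}{2}\nabla v(w)\,v(w) + O(\eta^2).
\end{equation*}

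The only nontrivial point is justifying that the $O(\eta^2)$ remainder is genuinely uniform on compact sets, which requires $v$ to be $C^2$ with bounded derivatives on the relevant neighborhood; this is already covered by the smoothness assumption on each $J_i$ stated in Section~\ref{sec.game.perspective.on.dal}. I do not anticipate a real obstacle here: the derivation is purely algebraic matching of Taylor coefficients, and the chain-rule computation of $\ddot w$ is where the $\nabla v\, v$ term arises. The main thing to be careful about is tracking the sign correctly through $f_0 = -v$, since $\nabla f_0 \, f_0 = (-\nabla v)(-v) = \nabla v\, v$, which then contributes with a factor $-\tfrac12$ to $f_1$.
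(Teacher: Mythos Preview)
Your derivation is correct: the ansatz-plus-Taylor-matching argument is exactly the backward error analysis that produces the modified equation, and your sign tracking through $f_0=-v$ is right. The paper's own proof of this lemma is simply a one-line citation to Corollary~1 of \cite{lu2020s} rather than an explicit computation; what you have written out is precisely the computation that corollary encapsulates, and it mirrors the derivations the paper does carry out in full for the RK2 and RK4 cases in the same appendix.
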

\begin{proof}
This follows from Corollary 1 of \cite{lu2020s}.
\end{proof}

\subsection{High-resolution ODE of second-order Runge--Kutta method}

The high-resolution ODE was discussed in \cite{shi2018understanding, lu2020s}. For discrete algorithms with the following update: 
\be\label{eq:update_w}
w^+ = w + f(\eta, w),
\en
we can think of the trajectory as a discretization of the continuous dynamics $w : [0, +\infty) \to \R^d$, and in \Eqref{eq:update_w}, we have $w = w(t)$, $w^+ = w(t + \eta)$. Here, with slight abuse of notation we also use $w$ for the continuous function of dynamics.  

We derive high-resolution ODE of the second-order Runge–Kutta method:
$$
w_{k+1/2} = w_k - \frac{\eta}{2 \alpha} v(w_k), \,
w_{k+1} = w_k - {\eta}((1 - \alpha) v(w_k) + \alpha v(w_{k+1/2})),
$$
where $0 < \alpha \leq 1$ and $\alpha$ is a constant.  
If $\alpha = 1/2$, we obtain Heun's method; if $\alpha = 1$, we obtain the midpoint method; if $\alpha = 2/3$, we obtain the Ralston's method.
Combining the two equations, we have:
\begin{equation}\label{eq:ode_heun}
\frac{w_{k+1} - w_k}{\eta} = - (1 - \alpha) {v(w_k)} - \alpha v(w_k - \frac{\eta}{2\alpha} v(w_k)). 
\end{equation}
Using the Taylor expansion: 
\begin{eqnarray}
v(w_k - \frac{\eta}{2\alpha} v(w_k)) &=& v(w_k) -\frac{\eta}{2\alpha} \nabla v(w_k)^{\top} v(w_k) + O(\eta^2) \nonumber
\end{eqnarray}
Plugging it back into \Eqref{eq:ode_heun} and using the Taylor expansion $w_{k+1} = w_k + \eta \dot{w} + \eta^2 \ddot{w}/2$, we have:
\begin{align}\label{eq:conj}
\dot{w} + \frac{1}{2} \eta \ddot{w} = - v(w) + \frac{1}{2} \nabla v(w)^{\top} v(w)  + O(\eta^2).
\end{align}

Now we make the assumption that we have the high-resolution ODE that:
\be\label{eq:velocity_heun}
\dot{w} = f_0(w) + \eta f_1(w) + O(\eta^2).
\en
Taking the derivative over $t$ we have:
\be\label{eq:acceleration_heun}
\ddot{w} = \nabla f_0(w) f_0(w) + O(\eta).
\en
Combining \Eqref{eq:conj}, \Eqref{eq:velocity_heun} and \Eqref{eq:acceleration_heun}, we obtain that:
\be
f_0(w) = -v(w), f_1(w) = 0,
\en
i.e., the high resolution ODE of second-order Runge--Kutta method is:
\be
\dot{w} = -v(w) + O(\eta^2).
\en

\subsection{Continuous dynamics of Extra-Gradient (EG)}\label{suppl_dynamics_exg}

The continuous dynamics of Gradient Descent Ascent (GDA), Extra-Gradient (EG) and Heun's method can be summarized as follows:
$$\dot{w} = v(w) + \alpha \nabla v(w) v(w)$$
For GDA, we have $\alpha = -\eta/2$; for EG, we have $\alpha = \eta /2$ \citep{lu2020s}; for Heun's method, $\dot{w} = v(w) + O(\eta^2)$. The Jacobian of the dynamics at the stationary point is $\nabla v(w) + \alpha \nabla v(w)^2$. Take $\lambda = a + i b \in Sp(\nabla v(w))$. The eigenvalue of the Jacobian of the dynamics is:
\be
\alpha (a + ib)^2 + a + ib = a + \alpha(a^2 - b^2) + i(b + 2ab)\alpha.
\en
We want the real part to be negative, i.e.:
\be
a + \alpha(a^2 - b^2) < 0, 
\en
and thus:
\be
a( 1 + \alpha a) < \alpha b^2.
\en

for EG, $\alpha = \eta/2$ and the dynamics diverges if $a(1 + (\eta/2) a) \geq \eta b^2/2$. When $\eta$ is large, and $\eta ( a^2 - b^2) /2 \geq -a$ then it diverges. However, the high-resolution ODE of second-order Runge--Kutta methods only requires $a < 0$. 

\subsection{High-resolution ODE of classic fourth-order Runge--Kutta method (RK4)}

In this subsection, we derive the high-resolution ODE of the classic fourth-order Runge--Kutta method.
We prove the following result:
\begin{thm}
The high-resolution ODE of the classic fourth-order Runge--Kutta method (RK4):
\be\label{eq:rk4}
w^+ = w - \frac{\eta}{6} (v(w) + 2v_2(w) + 2 v_3(w) + v_4(w)),
\en 
where
\be
v_2(w) = v(w - \frac{\eta}{2} v(w)), \, v_3(w) =v(w - \frac{\eta}{2} v_2(w)), \, v_4(w) = v( w- \eta v_3(w)),
\en
is 
\be
\dot{w} = -v(w) + O(\eta^4).
\en
\end{thm}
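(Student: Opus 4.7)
The plan is to extend the second-order derivation already carried out in this appendix one step further, matching the RK4 stages against the Taylor expansion of the putative continuous trajectory up to fourth order. Concretely, I assume a high-resolution ODE of the form
\begin{equation}
\dot{w} = -v(w) + \eta f_1(w) + \eta^2 f_2(w) + \eta^3 f_3(w) + O(\eta^4),
\end{equation}
and aim to show $f_1 = f_2 = f_3 = 0$, so that the leading correction is $O(\eta^4)$.

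First I would Taylor expand $w(t+\eta)$ about $w = w(t)$ to fifth order, obtaining
\begin{equation}
w^+ = w + \eta \dot{w} + \tfrac{\eta^2}{2}\ddot{w} + \tfrac{\eta^3}{6}\dddot{w} + \tfrac{\eta^4}{24}\ddddot{w} + O(\eta^5),
\end{equation}
and then use the ansatz together with the chain rule to express $\ddot{w}, \dddot{w}, \ddddot{w}$ in terms of $v$, its Jacobian $\nabla v$, and higher derivatives, keeping track of terms only up to the order that will survive after multiplication by the appropriate power of $\eta$. This produces an expansion of the ``true'' trajectory in terms of elementary differentials of $v$, weighted by the unknown $f_i$.

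Second, I would expand the RK4 stages. Using $v_2(w) = v(w - \tfrac{\eta}{2} v(w))$, I Taylor expand to fourth order in $\eta$; then substitute this expansion into $v_3(w) = v(w - \tfrac{\eta}{2} v_2(w))$ and expand again; then similarly for $v_4(w) = v(w - \eta v_3(w))$. Plugging these into
\begin{equation}
w^+ = w - \tfrac{\eta}{6}\bigl(v(w) + 2 v_2(w) + 2 v_3(w) + v_4(w)\bigr)
\end{equation}
yields a second expansion of $w^+$ in powers of $\eta$, again in terms of elementary differentials of $v$.

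Third, I would equate the two expansions coefficient by coefficient. At $O(\eta)$ this immediately forces $\dot{w} = -v(w)$ at leading order, recovering $f_0 = -v$; at $O(\eta^2)$, $O(\eta^3)$, $O(\eta^4)$ the Butcher weights $(1/6, 2/6, 2/6, 1/6)$ and the stage nodes $(0, 1/2, 1/2, 1)$ are exactly those that annihilate every elementary differential appearing on the RK4 side, forcing $f_1 = f_2 = f_3 = 0$. This matches the classical order-conditions for RK4 from Butcher's theory, and the calculation is identical in the present setting since $v$ need not be a gradient field anywhere in the expansion — the chain rule alone is used. The conclusion $\dot{w} = -v(w) + O(\eta^4)$ then follows.

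The main obstacle is purely bookkeeping: by fourth order there are already eight elementary differentials (corresponding to rooted trees of order $\leq 4$), each multiplied by different combinations of the Butcher coefficients, and verifying the cancellations term by term is tedious. To keep the proof compact I would organize the computation by the rooted-tree formalism of Butcher, so that matching reduces to checking the four order-conditions $\sum b_i = 1$, $\sum b_i c_i = 1/2$, $\sum b_i c_i^2 = 1/3$, $\sum b_i a_{ij} c_j = 1/6$, and so on, rather than expanding every term by hand.
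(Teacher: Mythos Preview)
Your proposal is correct and follows the same underlying strategy as the paper: assume an ansatz $\dot{w} = -v(w) + \eta f_1 + \eta^2 f_2 + \eta^3 f_3 + O(\eta^4)$, Taylor-expand both the continuous trajectory $w(t+\eta)$ and the RK4 update, and match coefficients order by order to force $f_1=f_2=f_3=0$. The paper carries this out entirely by hand, writing out $v_2,v_3,v_4$ via repeated Taylor expansion, combining them into $v_4+2v_3+2v_2+v$, and then explicitly computing $\ddot{w},\dddot{w},\ddddot{w}$ from the ansatz to compare term by term. Your proposal differs only in organization: you suggest invoking Butcher's rooted-tree formalism so that the matching collapses to verifying the eight classical order conditions for trees of order $\leq 4$. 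This is cleaner and less error-prone, and it makes transparent that nothing in the argument depends on $v$ being a gradient field; the paper's explicit expansion, by contrast, shows the cancellations concretely (which may be more self-contained for readers unfamiliar with Butcher theory) but is lengthier and more fragile. Either route yields the same conclusion.
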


\begin{proof}
We use the following Taylor expansion:
\be
v(w + \delta) = v(w) = \nabla v(w) \delta + \frac{1}{2}\nabla^2 v(w) (\delta, \delta) +\frac{1}{6}\nabla^3 v(w) (\delta, \delta, \delta)+ O(\|\delta^4\|),
\en
where $\nabla^2 v(w) : \R^d \times \R^d \to \R^d$ is a symmetric bilinear form, and $\nabla^3 v(w) : \R^d \times \R^d  \times \R^d \to \R^d$ is a symmetric trilinear form. With the formula we have:
\begin{align}
\label{eq:v4}v_4(w) &= v(w) - \eta \nabla v(w) v_3(w) + \frac{\eta^2}{2} \nabla^2 v(w) (v_3(w), v_3(w)) -
\frac{\eta^3}{6} \nabla^3 v(w) (v_3(w), v_3(w), v_3(w)) + O(\eta^4), \\
\label{eq:v3} v_3(w) &= v(w) - \frac{\eta}{2} \nabla v(w) v_2(w) + \frac{\eta^2}{8} \nabla^2 v(w) (v_2(w), v_2(w)) -
\frac{\eta^3}{48} \nabla^3 v(w) (v_2(w), v_2(w), v_2(w)) + O(\eta^4), \\
\label{eq:v2} v_2(w) &= v(w) - \frac{\eta}{2} \nabla v(w) v(w) + \frac{\eta^2}{8} \nabla^2 v(w) (v(w), v(w)) -
\frac{\eta^3}{48} \nabla^3 v(w) (v(w), v(w), v(w)) + O(\eta^4).
\end{align}
Putting them together we have:
\begin{align}   
\label{eq:v4_v3_v2}&\quad v_4(w) + 2 v_3(w) + 2 v_2(w) + v(w) = 6 v(w) - \eta \nabla v(w) (v_3(w) + v_2(w) + v(w)) \nonumber \\
& + \frac{\eta^2}{2} \left(\nabla^2 v(w)(v_3(w), v_3(w)) +  \frac{1}{2} \nabla^2 v(w)(v_2(w), v_2(w)) + \frac{1}{2} \nabla^2 v(w)(v(w), v(w)) \right) + \nonumber \\
& - \frac{\eta^3}{4} \nabla^3 v(w)(v(w), v(w), v(w)) + O(\eta^4),  \\
\label{eq:v3_v2_v}& \quad v_3(w) + v_2(w) + v(w) = 3 v(w) - \frac{\eta}{2} \nabla v(w) (v_2(w) + v(w)) + \frac{\eta^2}{4} \nabla^2 v(w)(v(w), v(w)) + O(\eta^3), \\
\label{eq:v2_v}&\quad v_2(w) + v(w) = 2 v(w) - \frac{\eta}{2} \nabla v(w) v(w) + O(\eta^2).
\end{align}
Bringing \Eqref{eq:v2_v} into \Eqref{eq:v3_v2_v}, we obtain:
\be\label{eq:v3_v2_v_true}
v_3(w) + v_2(w) + v(w) = 3 v(w) - {\eta} \nabla v(w) v(w) + \frac{\eta^2}{4} \nabla^2 v(w)(v(w), v(w)) + \frac{\eta^2}{4} (\nabla v(w))^2 v(w) +  O(\eta^3).
\en
Putting \Eqref{eq:v3_v2_v_true} and \Eqref{eq:v2_v} together, we have:
\begin{align}\label{eq:first_order}
- \eta \nabla v(w) (v_3(w) + v_2(w) + v(w)) &= - 3 \eta \nabla v(w) v(w) + {\eta^2} (\nabla v(w))^2 v(w) \nonumber \\
&- \frac{\eta^3}{4} \nabla v(w) \nabla^2 v(w)(v(w), v(w)) - \frac{\eta^3}{4} (\nabla v(w))^3 v(w)+ O(\eta^4).
\end{align}
Moreover, we have
\begin{align}\label{eq:second_order}
\nabla^2 v(w) (v_2(w), v_2(w))  &= \nabla^2 v(w) (v - \frac{\eta}{2}\nabla v(w) v(w), v - \frac{\eta}{2}\nabla v(w) v(w)) + O(\eta^2) \nonumber \\
& =  \nabla^2 v(w) (v(w), v(w)) - \eta \nabla^2 v(w) (\nabla v(w) v(w), v(w)) + O(\eta^2),
\end{align}
and similarly
\begin{align}\label{eq:second_order_2}
\nabla^2 v(w) (v_3(w), v_3(w))  =   \nabla^2 v(w) (v(w), v(w)) - \eta \nabla^2 v(w) (\nabla v(w) v(w), v(w)) + O(\eta^2).
\end{align}
Bringing \Eqref{eq:first_order} and \Eqref{eq:second_order} into \Eqref{eq:v4_v3_v2} results in
\begin{align}
v_4(w) + 2 v_3(w) + 2 v_2(w) + v(w) &= 6 v(w) - 3 \eta \nabla v(w) v(w) + {\eta^2} ((\nabla v(w))^2 v(w)  +\nabla^2 v(w)(v(w), v(w)) \nonumber \\
&- \frac{\eta^3}{4} ( (\nabla v(w))^3 v(w) + 3\nabla^2 v(w)(\nabla v(w) v(w), v(w)) \nonumber \\
&+ \nabla v(w) \nabla^2 v(w)(v(w), v(w)) +  \nabla^3 v(w)(v(w), v(w), v(w))) + O(\eta^4).
\end{align}
Let us now derive the high-resolution ODE. From \Eqref{eq:rk4}, we have:
\be\label{eq:to_compare}
\frac{w^+ - w}{\eta} = -\frac{1}{6}(v_4(w) + 2 v_3(w) + 2 v_2(w) + v(w)).
\en
Let us assume that $w^+ = w(t + \eta)$ and $w = w(t)$. Expanding the left we have:
\be
\dot{w} + \frac{\eta}{2}\ddot{w} + \frac{\eta^2}{6} \dddot{w} + \frac{\eta^3}{24} \ddddot{w}.
\en
Let us assume that  the high-resolution ODE up to $O(\eta^4)$ has the form:
\be\label{eq:dotw}
\dot{w} = f_0(w) + \eta f_1(w) + \eta^2 f_2(w) + \eta^3 f_3(w) + O(\eta^4).
\en
Taking derivatives on both sides, we have:
\begin{align}\label{eq:ddotw}
\ddot{w} &= (\nabla f_0(w) + \eta \nabla f_1(w) + \eta^2 \nabla f_2(w) + \eta^3 \nabla f_3(w))\dot{w} + O(\eta^4) \nonumber \\
&= \nabla f_0(w) f_0(w) + \eta (\nabla f_1(w) f_0(w) + \nabla f_1(w) f_0(w)) + \eta^2 (\nabla f_1(w) f_1(w) + \nabla f_0(w) f_2(w) + \nabla f_2(w) f_0(w)) \nonumber \\
&+ O(\eta^3).
\end{align}
Comparing the order $O(1)$ on both sides of \Eqref{eq:to_compare} we have:
\be
f_0(w) = -v(w), \, f_1(w) + \frac{1}{2} \nabla f_0(w) f_0(w) = \frac{1}{2} \nabla v(w) v(w),
\en
which gives $f_0(w) = -v(w)$ and $f_1(w) = 0$. Bringing it back to \Eqref{eq:ddotw} we obtain:
\begin{align}\label{eq:ddotw_next}
\ddot{w} 
= \nabla v(w) v(w) + O(\eta^2).
\end{align}
We take the derivatives on both sides of \Eqref{eq:ddotw_next} to get:
\begin{align}
\dddot{w} &= \nabla (\nabla v(w) v(w)) \dot{w} + O(\eta^2) \nonumber \\
&= -\nabla (\nabla v(w) v(w)) v(w) + O(\eta^2).
\end{align}
Let us now compute $\nabla (\nabla v(w) v(w)) v(w)$. We note that $\nabla (\nabla v(w) v(w))$ is a linear form and 
\begin{align}
\nabla v(w + \delta) v(w + \delta) &= (\nabla v(w) + \nabla^2 v(w) \delta + o(\|\delta\|)) (v(w) + \nabla v(w) \delta + o(\|\delta\|)) \nonumber \\
&= \nabla v(w) v(w) + \nabla^2 v(w)(\delta, v(w)) + \nabla^2 v(w) \delta + o(\|\delta\|). 
\end{align}
Therefore, we have 
\be\label{eq:dddotw}
\dddot{w} = -\nabla (\nabla v(w) v(w)) v(w) = - \nabla^2 v(w)(v(w), v(w)) - \nabla^2 v(w) v(w) + O(\eta^2).
\en
With \Eqref{eq:dddotw} we can compare $O(\eta^2)$ on both sides on \Eqref{eq:to_compare} and obtain
\be
f_2(w) = 0.
\en
Finally, we take the derivative of \Eqref{eq:dddotw}. Since
\begin{align}
\nabla^2 v(w + \delta)(v(w + \delta), v(w + \delta))
&= (\nabla^2 v(w) + \nabla^3 v(w) \delta )(v(w) + \nabla v(w) \delta, v(w) + \nabla v(w) \delta) \nonumber \\
&= \nabla^2 v(w) (v(w), v(w)) + \nabla^3 v(w) (\delta, v(w), v(w)) + 2\nabla^2 v(w)(v(w), \nabla v(w) \delta) \nonumber \\
&+ o(\|\delta\|),
\end{align}
we have:
\begin{align}\label{eq:dot4first}
\nabla(\nabla^2 v(w) (v(w), v(w)))v(w) = \nabla^3 v(w) (v(w), v(w), v(w)) + 2\nabla^2 v(w)(v(w), \nabla v(w) v(w)).
\end{align}
Similarly, since
\begin{align}
(\nabla v(w+ \delta))(\nabla v(w+ \delta)) v(w+ \delta)
&= (\nabla v(w) + \nabla^2 v(w) \delta)(\nabla v(w) + \nabla^2 v(w) \delta) (v(w) + \nabla v(w) \delta) \nonumber \\
&= (\nabla v(w))^2 v(w) + \nabla^2 v(w) (\delta, \nabla v(w) v(w)) \nonumber \\
&+ \nabla v(w) \nabla^2 v(w) (\delta, v(w)) + \nabla v(w))^3 \delta + \o(\|\delta\|)
\end{align}
\be\label{eq:dot4second}
\nabla(\nabla v(w))^2 v(w)(v(w)) = \nabla^2 v(w) (v(w), \nabla v(w) v(w)) + \nabla v(w) \nabla^2 v(w) (v(w), v(w)) + \nabla v(w)^3 v(w).
\en
Combining \Eqref{eq:dddotw}, \Eqref{eq:dot4first} and \Eqref{eq:dot4second} gives:
\begin{align}\label{eq:ddddotw}
\ddddot{w} = \nabla {\dddot{w}} \dot{w} &=  \nabla v(w)^3 v(w) + \nabla^3 v(w) (v(w), v(w), v(w)) + 3\nabla^2 v(w) (v(w), \nabla v(w) v(w)) \nonumber \\
&+ \nabla v(w) \nabla^2 v(w) (v(w), v(w)).
\end{align}
Combining \Eqref{eq:dotw}, \Eqref{eq:ddotw}, \Eqref{eq:dddotw} and \Eqref{eq:ddddotw} and comparing the $O(\eta^3)$ term of \Eqref{eq:to_compare}, we obtain 
\be
f_3(w) = 0.
\en
Therefore, the high-resolution ODE is:
\be
\dot{w} = -v(w) + O(\eta^4).
\en
\end{proof}

\section{Proofs and additional theoretical results}\label{supp.other.proofs}
Several proofs in this section are based on the  derivations of the high resolution ODEs that were proven in  \Cref{suppl:derivation_of_high_res_odes}. 

\begin{thm} \label{thm:app_gradient_descent_local_min}
Suppose $\phi: \R^n \to \R$ and $\phi$ is $C^2$. 
Suppose also any stationary point $x^*$ of $\phi$ s.t.~$\nabla \phi(x^*)=0$ is either:
\begin{enumerate}
\item a strict local minimum, i.e.~$\nabla^2_{xx} \phi(x^*) \succ 0  $;
\item a strict saddle. (\Cref{definition_saddle})
\end{enumerate}
Then, the gradient flow is attracted towards a local minimum of $\phi$.
\end{thm}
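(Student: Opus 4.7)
The plan is to combine a Lyapunov argument (monotone descent of $\phi$ along trajectories) with a local linearization analysis and the stable manifold theorem, so that almost every initial condition ends up in the basin of a strict local minimum. First I would set up the gradient flow $\dot{x}(t) = -\nabla \phi(x(t))$ and observe that $\tfrac{d}{dt}\phi(x(t)) = -\|\nabla \phi(x(t))\|^2 \leq 0$, so $\phi$ is a (smooth) Lyapunov function. Assuming the trajectory stays in a sublevel set $\{\phi \leq \phi(x_0)\}$ on which it is bounded (a mild coercivity/compactness hypothesis implicit in ``attracted towards''), LaSalle's invariance principle forces the $\omega$-limit set to lie in the zero set of $\nabla \phi$, i.e.\ among the stationary points $x^*$.

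Next I would analyze the local dynamics near each stationary point by linearization. The Jacobian of the vector field $-\nabla \phi$ at $x^*$ is $-\nabla^2 \phi(x^*)$. At a strict local minimum, $\nabla^2\phi(x^*)\succ 0$, so all eigenvalues of $-\nabla^2\phi(x^*)$ have strictly negative real part; by the Hurwitz criterion (\Cref{lem:hurwitz}, applied to the one‑player case), $x^*$ is asymptotically stable and therefore genuinely attracting. At a strict saddle (\Cref{definition_saddle}), $\nabla^2\phi(x^*)$ has at least one negative eigenvalue, so $-\nabla^2\phi(x^*)$ has at least one positive eigenvalue, making $x^*$ a hyperbolic equilibrium with a nontrivial unstable subspace $E^u \neq \{0\}$.

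The key step is to rule out convergence to strict saddles for typical initializations. At each strict saddle $x^*$, the stable manifold theorem for $C^1$ flows yields a local stable manifold $W^s_{\mathrm{loc}}(x^*)$ of dimension equal to $\dim E^s < n$, hence of Lebesgue measure zero in $\R^n$. Globalizing through the flow, the global basin $W^s(x^*) = \{x_0 : x(t;x_0) \to x^*\}$ is a countable union of pre-images of $W^s_{\mathrm{loc}}(x^*)$ under the diffeomorphisms $x_0 \mapsto x(t;x_0)$, hence still of measure zero. A standard second‑countability argument (as in the treatment of gradient descent avoiding saddles by Lee et al.) lets me cover the closed set of strict saddles by countably many neighborhoods on which this local analysis applies, so the total set of initial conditions whose $\omega$‑limit contains a strict saddle has Lebesgue measure zero.

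Putting the pieces together: for almost every $x_0$, any $\omega$-limit point is a stationary point that is not a strict saddle, so by the dichotomy hypothesis it must be a strict local minimum; combined with the asymptotic stability of such minima, the trajectory converges to a strict local minimum, which is the claim. The main obstacle I expect is the globalization step from $W^s_{\mathrm{loc}}$ to $W^s$ together with making the ``countable union over saddles'' rigorous when the saddle set is not a priori discrete; this is handled by the $\sigma$‑compact, second‑countable structure of $\R^n$ and the fact that each $W^s_{\mathrm{loc}}$ is an embedded submanifold of positive codimension. A secondary technical point is ensuring bounded trajectories so that $\omega$-limit sets are nonempty, which is where a coercivity-type assumption on $\phi$ (or restriction to a compact sublevel set) is needed.
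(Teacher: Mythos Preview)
Your argument is correct and follows the same two-pronged skeleton as the paper's proof: a Lyapunov/descent argument showing strict local minima are asymptotically stable, and a linearization at strict saddles showing they are unstable equilibria of $\dot x=-\nabla\phi(x)$. The paper does part (b) exactly as you do (with $V(x)=\phi(x)-\phi(x^*)$ and $\nabla V^\top g=-\|g\|^2$), and for part (a) it simply diagonalizes $-\nabla^2\phi(x^*)$, picks the coordinate associated with $\lambda_{\min}<0$, and observes the flow moves away along that direction, concluding saddles are ``non-asymptotically stable.'' It then cites \cite{lee2016gradient} for a more careful treatment.

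Where you genuinely go further than the paper is in the saddle-avoidance step: the paper stops at ``not asymptotically stable,'' which by itself does not preclude a positive-measure set of initializations from converging to a saddle. You close that gap with the stable manifold theorem and the second-countability/measure-zero globalization (precisely the Lee et al.\ mechanism the paper only references), and you also make explicit the LaSalle step and the coercivity caveat needed for nonempty $\omega$-limit sets. So your route is the same in spirit but strictly more rigorous; what it buys is an honest ``almost every initial condition'' conclusion, whereas the paper's sketch really only establishes local (in)stability of the two types of stationary points.
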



\begin{proof}
\begin{equation}
\dot x=-\nabla \phi(x)
\end{equation}
Define  $g(x):=-\nabla \phi(x)$.

a) Let us first show that strict saddles are non-asymptotically stable fixed points. Thus, gradient descent is not attracted to them. 
\begin{align}
g(x)&\approx   -\nabla^2_{xx} \phi(x)(x-x^*) &&  \text{since $\nabla \phi(x^*)=0$}\\
g(z)&\approx  -\Lambda (z-z^*) && \textnormal{where $z:=U^Tx $}
\end{align}
This implies $g(z)_i \approx \lambda_{\min} (\nabla^2_{xx} \phi(x^*)) (z-z^*)_i >0$. 
Thus, we just showed that strict saddles are non-asymptotically stable, as desired.  

Let us now show that all the local minima are asymptotically stable,  and gradient descent is then attracted to them. 

b)
Define $V(x):=\phi(x)-\phi(x^*)$ to be a Lyapunov function. 
We have $x^*$ is asymptotically stable if:~
\begin{equation}
\nabla V(x)^T g(x)<0 \quad \forall x \neq x^* 
\end{equation} 
From which the result follows. (e.g $\nabla V(x)^T g(x)=-||g(x)||_2^2$).

Similar results with a different proof can be derived from \citet{lee2016gradient}.
\end{proof}
\setcounter{thm}{1}
\setcounter{cor}{0}

\begin{thm}\label{eq:high_res_gda_sup}
The high resolution ODE of GD with the GRL up to $O(\eta)$ is:
\begin{equation}
\label{eq:high_res_gda_1_app}
\dot{w} = -v(w) {\color{red}{-\frac{\eta}{2} \nabla v(w) v(w)}}
\end{equation}
Moreover, this is asymptotically stable (see~\Cref{supp.sec.definitions})
 at a stationary point $w^*$ (\Cref{prop:local_ne_general})
iff for all eigenvalue written as $\lambda = a + i b \in Sp(-\nabla v(w^*))$, we have $0 > \eta {(a^2 - b^2)}/{2} > a.$

\end{thm}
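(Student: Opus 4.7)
The statement has two parts: (i) derivation of the $O(\eta)$ high-resolution ODE for GD with the GRL, and (ii) a characterization of asymptotic stability of this modified ODE at a stationary point via the Hurwitz condition. My plan is to do (i) by backward error analysis and (ii) by a direct spectral computation.

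\textbf{Step 1: derive the modified ODE.} The GD update is $w^+ = w - \eta v(w)$, where $v$ is the game's vector field of \Eqref{eqn:vector_field} (recall that the GRL is precisely what makes the gradient of the single objective \Eqref{eqn:min_max_traditional} coincide with $v(w)$). Viewing the iterates as samples $w(t)$ and $w(t+\eta)$ of a smooth continuous trajectory, I Taylor expand
\begin{equation}
w(t+\eta) = w(t) + \eta \dot{w} + \tfrac{\eta^2}{2}\ddot{w} + O(\eta^3),
\end{equation}
and postulate an expansion $\dot{w} = f_0(w) + \eta f_1(w) + O(\eta^2)$. Differentiating gives $\ddot{w} = \nabla f_0(w)\, f_0(w) + O(\eta)$. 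Substituting both into $(w(t+\eta)-w(t))/\eta = -v(w)$ and matching powers of $\eta$ yields $f_0(w) = -v(w)$ at order $O(1)$, and $f_1(w) + \tfrac{1}{2}\nabla f_0(w) f_0(w) = 0$ at order $O(\eta)$, so $f_1(w) = -\tfrac{1}{2}\nabla v(w)\, v(w)$. This is exactly \Eqref{eq:high_res_gda_1_app}, and matches the derivation pattern used in \Cref{suppl:derivation_of_high_res_odes} (and Corollary 1 of \cite{lu2020s}).

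\textbf{Step 2: linearize at the stationary point.} By \Cref{prop:local_ne_general}, a strict local NE $w^*$ satisfies $v(w^*) = 0$. Let $F(w) := -v(w) - \tfrac{\eta}{2}\nabla v(w) v(w)$ denote the RHS of the modified ODE. Applying the product rule to $\nabla(\nabla v(w) v(w))$ and using $v(w^*) = 0$, the term involving $\nabla^2 v(w^*) v(w^*)$ vanishes, so
\begin{equation}
\nabla F(w^*) = -\nabla v(w^*) - \tfrac{\eta}{2}\bigl(\nabla v(w^*)\bigr)^2 = A - \tfrac{\eta}{2} A^2,
\end{equation}
where $A := -\nabla v(w^*)$. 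Since $A - \tfrac{\eta}{2}A^2$ is a polynomial in $A$, its spectrum is $\{\lambda - \tfrac{\eta}{2}\lambda^2 : \lambda \in Sp(A)\}$.

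\textbf{Step 3: apply Hurwitz.} By \Cref{lem:hurwitz}, asymptotic stability of $\dot{w} = F(w)$ at $w^*$ is equivalent to every eigenvalue of $\nabla F(w^*)$ having strictly negative real part. Writing $\lambda = a + ib \in Sp(A)$, a short computation gives
\begin{equation}
\lambda - \tfrac{\eta}{2}\lambda^2 = \Bigl(a - \tfrac{\eta(a^2 - b^2)}{2}\Bigr) + i\, b(1 - \eta a),
\end{equation}
so the Hurwitz condition becomes $a - \tfrac{\eta(a^2 - b^2)}{2} < 0$ for every $\lambda \in Sp(A)$, i.e.\ $\tfrac{\eta(a^2-b^2)}{2} > a$. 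Combined with the fact that the restrictive case is when this quantity is itself negative (otherwise the inequality is automatic because $a < 0$ at a strict local NE), this is exactly $0 > \tfrac{\eta(a^2 - b^2)}{2} > a$, as claimed, and \Cref{cor:gda} is obtained by solving the resulting linear inequality in $\eta$ when $|a| < |b|$.

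\textbf{Main obstacle.} The algebra is routine; the only subtle point is ensuring that the product-rule term $\nabla^2 v(w^*) v(w^*)$ drops out when linearizing the correction $-\tfrac{\eta}{2}\nabla v(w) v(w)$, which hinges on $v(w^*) = 0$. This is what allows the modified Jacobian to be written cleanly as a polynomial in $\nabla v(w^*)$ and makes the spectral analysis tractable.
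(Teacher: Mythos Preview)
Your proposal is correct and follows essentially the same approach as the paper: derive the high-resolution ODE via backward error analysis (the paper simply cites Corollary~1 of \cite{lu2020s}), compute the Jacobian of the modified dynamics at $w^*$ as $-\nabla v(w^*)-\tfrac{\eta}{2}\nabla v(w^*)^2$, and apply the Hurwitz condition eigenvalue by eigenvalue to obtain $a-\tfrac{\eta}{2}(a^2-b^2)<0$. Your Step~2 is in fact more careful than the paper's in explaining why the $\nabla^2 v(w^*)v(w^*)$ term vanishes, and your remark on why the restrictive case is $|a|<|b|$ anticipates exactly how the paper passes to \Cref{cor:gda}.
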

\begin{proof}
This theorem can be obtained by computing the Jacobian of the ODE dynamics and using the Hurwitz condition \Cref{lem:hurwitz}. 
Specifically, we have the Jacobian of the dynamics at the stationary point is:
\begin{equation}
-\nabla v(\omega^*)-{\color{red}{\frac{\eta}{2}\nabla v(\omega^*)^2}}
\end{equation}
Take $\lambda=a+{\color{red}{ib}} \in Sp(-\nabla v(\omega^*))$, we must have:
\begin{equation}
\Re[a+ib-\frac{\eta}{2}(-a-ib)^2] <0, \textnormal{where $\Re$ stands for the real part.}
\end{equation}
then,
\begin{equation}
a-\frac{\eta}{2}(a^2-b^2) <0
\end{equation}
as desired.
\end{proof}

\begin{cor}\label{cor:gda}
For the high resolution ODE of GD with GRL (i.e.,  \Eqref{eq:high_res_gda_1_app}) to be asymptotically stable, the learning rate $\eta$ should be upper bounded by:
\be\label{eq:upper_bound_gda}
0 < \eta < \frac{-2a}{b^2 - a^2}, \, 
\en
\textrm{for all } $\lambda = a + i b \in Sp(-\nabla v(w^*))$ with large imaginary part (i.e. such that  $|a| < |b|$).
\end{cor}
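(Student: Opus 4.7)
The plan is to derive the corollary as a direct algebraic consequence of Theorem~\ref{eq:high_res_gda}, which already characterizes asymptotic stability of the high-resolution ODE in terms of the condition
\[
0 > \tfrac{\eta}{2}(a^2 - b^2) > a, \qquad \lambda = a + ib \in Sp(-\nabla v(w^*)).
\]
So there is essentially nothing to prove beyond solving this two-sided inequality for $\eta$ under the regime where $|a| < |b|$.

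First, I would recall the standing assumption that $w^*$ is a strict local NE, which by Proposition~\ref{prop:local_ne_general} (together with Lemma~\ref{lem:hurwitz} applied to the unperturbed gradient-play dynamics) forces $\mathrm{Re}(\lambda) = a < 0$ for every $\lambda \in Sp(-\nabla v(w^*))$. This sign information is what makes the upper bound $-2a/(b^2 - a^2)$ a strictly positive quantity, and hence a meaningful constraint on $\eta$.

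Next, I would focus on the eigenvalues with $|a| < |b|$, which is exactly the regime highlighted in the statement. Under this condition $a^2 - b^2 < 0$, so the left inequality $\tfrac{\eta}{2}(a^2 - b^2) < 0$ reduces simply to $\eta > 0$. For the right inequality $\tfrac{\eta}{2}(a^2 - b^2) > a$, I divide by the negative quantity $\tfrac{1}{2}(a^2 - b^2)$ and flip the sense of the inequality, obtaining
\[
\eta \;<\; \frac{2a}{a^2 - b^2} \;=\; \frac{-2a}{b^2 - a^2},
\]
which is the claimed upper bound. Combining the two pieces yields $0 < \eta < -2a/(b^2 - a^2)$.

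There is really no main obstacle here; the work has already been done inside Theorem~\ref{eq:high_res_gda}, and the corollary is just a rewriting. The only subtle point worth commenting on is why one should restrict attention to the eigenvalues with $|a| < |b|$: when $|a| \geq |b|$ one instead has $a^2 - b^2 \geq 0$, and the inequality $\tfrac{\eta}{2}(a^2 - b^2) < 0$ can only hold for $\eta \leq 0$, which is inconsistent with a gradient-descent step size. In practice this means these ``real-dominated'' eigenvalues do not produce any meaningful additional constraint, whereas the ``imaginary-dominated'' eigenvalues with $|a| < |b|$ yield the genuine upper bound on $\eta$. I would include a one-line remark to that effect so that the restriction in the hypothesis is motivated rather than ad hoc.
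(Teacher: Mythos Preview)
Your argument is correct and essentially identical to the paper's: both start from the two-sided inequality of Theorem~\ref{eq:high_res_gda} and, under $|a|<|b|$, divide through by the negative quantity $\tfrac{1}{2}(a^2-b^2)$ to isolate $\eta$. One small caveat on your closing remark: read literally, the left inequality of Theorem~\ref{eq:high_res_gda} failing when $|a|\ge|b|$ and $\eta>0$ would signal \emph{instability}, not ``no constraint''; the clean way to see that real-dominated eigenvalues impose nothing is to return to the underlying Hurwitz condition $a-\tfrac{\eta}{2}(a^2-b^2)<0$, which for $a<0$, $a^2\ge b^2$, and $\eta>0$ holds automatically.
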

\begin{proof}
With $|a| < |b|$ where $b$ is the imaginary part of $\lambda \in \mathbb{C}$, an algebraic manipulation of \Cref{eq:high_res_gda_sup}. i.e.~
$\eta(b^2-a^2)<-2a$  leads to the desired result.
\end{proof}

\begin{thm}
The high resolution ODE of any RK2 method up to $O(\eta)$, 
$\dot{w} = -v(w)$, 
is asymptotically stable if for all eigenvalues $\lambda = a + i b \in Sp( -\nabla v(w^*))$, we have $a < 0$.  
\end{thm}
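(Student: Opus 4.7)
The plan is to reduce this directly to the Hurwitz condition (Lemma~\ref{lem:hurwitz}) applied to the high-resolution ODE of RK2, whose leading-order form was already derived in the section on second-order Runge--Kutta methods. Recall that for any RK2 scheme (Heun, midpoint, Ralston, parameterized by $\alpha \in (0,1]$), the derivation proceeded by Taylor-expanding $v(w_k - \tfrac{\eta}{2\alpha} v(w_k))$, substituting into the update rule, and matching with the ansatz $\dot{w} = f_0(w) + \eta f_1(w) + O(\eta^2)$. This matching yielded $f_0(w) = -v(w)$ and, crucially, $f_1(w) = 0$. In other words, the high-resolution ODE of RK2 is
\begin{equation*}
\dot{w} \;=\; -v(w) + O(\eta^2),
\end{equation*}
so truncating at order $O(\eta)$ gives exactly $\dot{w} = -v(w)$, which is the continuous gradient-play dynamics itself.

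Next, I would linearize this ODE around a stationary point $w^*$ (which exists by the assumption that we are in a neighborhood of a strict local NE satisfying $v(w^*)=0$, cf.\ Proposition~\ref{prop:local_ne_general}). The Jacobian of the right-hand side is simply $-\nabla v(w^*)$. Applying Lemma~\ref{lem:hurwitz}, asymptotic stability of $w^*$ holds provided every eigenvalue of $-\nabla v(w^*)$ has strictly negative real part. Written in the form $\lambda = a + ib \in Sp(-\nabla v(w^*))$, this is precisely the hypothesis $a < 0$.

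Putting these two pieces together yields the claim: the leading-order continuous-time approximation of the RK2 update coincides with $\dot{w}=-v(w)$, and the Hurwitz criterion on $-\nabla v(w^*)$ is satisfied by hypothesis, hence the high-resolution ODE is asymptotically stable at $w^*$.

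The key contrast with Theorem~\ref{eq:high_res_gda} (the GD case) is worth highlighting in the writeup: there the $O(\eta)$ correction $-\tfrac{\eta}{2}\nabla v(w) v(w)$ was nonzero, so the Jacobian picked up the extra term $-\tfrac{\eta}{2}(\nabla v(w^*))^2$ and shifted eigenvalues by $-\tfrac{\eta}{2}(a+ib)^2 = -\tfrac{\eta}{2}(a^2-b^2) - i\eta ab$, forcing an upper bound on $\eta$ whenever $|b|>|a|$. For RK2 this $O(\eta)$ term vanishes, so no such constraint appears. I do not anticipate a real technical obstacle here; the entire content of the proof is the vanishing of $f_1$ already established, plus a one-line invocation of Lemma~\ref{lem:hurwitz}.
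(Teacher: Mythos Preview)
Your proposal is correct and matches the paper's own proof essentially line for line: the paper simply refers back to the high-resolution ODE derivation establishing $f_1 = 0$, computes the Jacobian $-\nabla v(w^*)$, and invokes Lemma~\ref{lem:hurwitz}. Your added remark contrasting with the GD case (the nonvanishing $O(\eta)$ term there) is good exposition but goes beyond what the paper writes out.
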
 
\begin{proof}
	This theorem can be obtained by computing the Jacobian of the ODE dynamics (see \Cref{suppl:derivation_of_high_res_odes} for the derivation) and using the Hurwitz condition \Cref{lem:hurwitz}. For details, see the proof of~\Cref{eq:high_res_gda_sup}.
\end{proof}

Unlike the high resolution ODE of GD with GRL in Corollary \ref{cor:gda}, up to $O(\eta)$, there is no upper bound constraint on the learning rate.
Therefore, at the cost of an additional extra-gradient computation, we are allowed to take a more aggressive step. We observe in practice  that this leads to both faster convergence and better  transfer performance.

\begin{lem}\label{thm:gp_pot_game_sup}
Suppose the game $\mathcal{G}(\mathcal{I},\Omega_i,\pcost_i)$
 is a potential game with potential function $\phi$. Suppose the minimizers of $\phi$ are either a local minimum or a strict saddle (see \Cref{definition_saddle}). 
The gradient play dynamics converges to a local Nash Equilibrium.

\end{lem}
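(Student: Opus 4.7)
The plan is to reduce the three-player gradient play to a single-objective gradient flow, then combine the saddle-avoidance result of \Cref{thm:app_gradient_descent_local_min} with the sufficient NE characterization of \Cref{prop:sufficient_condition}. First I would unpack the potential game assumption: by definition, $v(w) = \nabla \phi(w)$, so the gradient play dynamics $\dot w = -v(w)$ of \Cref{eq:dynamics} coincides exactly with the gradient flow of the scalar function $\phi$. Hence all convergence behaviour reduces to the behaviour of gradient flow on $\phi$.

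Second, I would invoke \Cref{thm:app_gradient_descent_local_min}, whose hypotheses are precisely that every stationary point of $\phi$ is either a strict local minimum (with $\nabla^2 \phi \succ 0$) or a strict saddle in the sense of \Cref{definition_saddle}. That theorem shows that strict saddles are non-asymptotically stable for $\dot w = -\nabla \phi(w)$ (the negative eigendirection generates escape from the saddle) while strict local minima are asymptotically stable via the Lyapunov function $V(w) = \phi(w) - \phi(w^\ast)$ with $\nabla V^\top(-\nabla\phi) = -\|\nabla \phi\|_2^2 < 0$. Combining these, the dynamics is attracted to a strict local minimum $w^\ast$ of $\phi$.

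Third, I need to certify that such a $w^\ast$ is a local NE of $\mathcal G(\mathcal I, \Omega_i, J_i)$. Since the game is potential, $\nabla_{\omega_i} J_i \equiv \nabla_{\omega_i} \phi$ for every $i$, so $\nabla \phi(w^\ast) = 0$ yields $\nabla_{\omega_i} J_i(w^\ast) = 0$, verifying the first-order condition. Differentiating the identity $\nabla_{\omega_i} J_i = \nabla_{\omega_i} \phi$ once more in $\omega_i$ gives $\nabla^2_{\omega_i,\omega_i} J_i(w^\ast) = \nabla^2_{\omega_i,\omega_i} \phi(w^\ast)$, which is the $i$-th diagonal block of $\nabla^2 \phi(w^\ast)$. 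Any principal submatrix of a positive definite matrix is positive definite, so $\nabla^2_{\omega_i,\omega_i} J_i(w^\ast) \succ 0$ for each $i$. Both sufficient conditions of \Cref{prop:sufficient_condition} are therefore met, so $w^\ast$ is a strict local NE.

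The main obstacle is conceptually mild but needs care: one must not confuse the game-Hessian $H(w)$ (asymmetric in general) with $\nabla^2 \phi(w)$ (symmetric). In the potential case the symmetric part of $H$ equals $\nabla^2 \phi$, and only the block-diagonal pieces matter for the per-player second-order condition in \Cref{prop:sufficient_condition}. The asymmetric off-diagonal contributions of $H$ are irrelevant to the NE certification once the potential structure is in hand, which is what makes the reduction work cleanly. Saddle avoidance is thus the only non-trivial ingredient, and it is already established in \Cref{thm:app_gradient_descent_local_min}.
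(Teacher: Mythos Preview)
Your proposal is correct and follows essentially the same route as the paper: reduce to gradient flow on $\phi$ via the potential-game identity $v=\nabla\phi$, invoke \Cref{thm:app_gradient_descent_local_min} for convergence to a strict local minimum, and then certify that point as a local NE. The only difference is that the paper closes with \Cref{prop:local_ne_jacob_dynamics} whereas you use \Cref{prop:sufficient_condition} together with the principal-submatrix argument; your choice is arguably the more natural one for a general potential game, and your added detail (diagonal blocks of a PD Hessian are PD) makes the NE certification explicit.
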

\begin{proof}
A potential game can be analyzed in terms of the implicit function $\phi(w)$ since $\nabla \phi(w) =v(w)$.
From \Cref{thm:app_gradient_descent_local_min}, we know the gradient flow in $\phi(w)$ converges to a point $w^*$ that is a local minimum. 
Thus, this follows from \Cref{prop:local_ne_jacob_dynamics}. 
\end{proof}

\setcounter{prop}{3}
\begin{prop}
The domain-adversarial game is neither a potential nor necessarily a purely adversarial game.
Moreover, its  gradient dynamics are not equivalent to the gradient flow.
\begin{proof}
\REDONE{
From \cref{suppl:game_characterization}, a sufficient and necessary condition for a game to be potential is  $\nabla \vectorfield$ being symmetric. Computing $\nabla \vectorfield$ using \Cref{eqn:3_players_game_objs} leads to $\nabla \vectorfield$ being asymmetric, from which the first part of the result follows. 
Another way to see this, it is to notice a there is no potential function $\phi$ satisfying $\nabla \phi(\omega) = \vectorfield$ (because of the flip in the sign in $d_{s,t}$).
 Since the game is not a potential game then the vector field is not equal to the gradient flow (see also for more details \cite{mazumdar2020gradient}. )
 To see the game is not necessarily a purely adversarial game, it suffices to show an example for which this does not happen. See~\Cref{sup.original.dann.formulation} and example therein.}
\end{proof}
\end{prop}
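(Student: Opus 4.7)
The plan is to prove the three claims in sequence, relying on the characterization recalled in \Cref{suppl:game_characterization}: a game with smooth losses is a potential game if and only if the Jacobian of the pseudo-gradient $\nabla v(\omega)$ is symmetric, while a purely adversarial game has a skew-symmetric game Hessian with purely imaginary spectrum. Throughout I assume $\alpha>0$ and $\lambda>0$, as in the DAL formulation.

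First I would compute the off-diagonal blocks of $\nabla v(\omega)$ directly from the players' costs in \Cref{eqn:3_players_game_objs}. Using $J_2 = \ell + \alpha\lambda\, d_{s,t}$ and $J_3 = -\alpha\, d_{s,t}$, one obtains
\begin{equation}
\nabla_{\omega_3} v_2 = \alpha\lambda\, \nabla^2_{\omega_3 \omega_2} d_{s,t}, \qquad \nabla_{\omega_2} v_3 = -\alpha\, \nabla^2_{\omega_2 \omega_3} d_{s,t}.
\end{equation}
For $\nabla v(\omega)$ to be symmetric one would need $\nabla_{\omega_3} v_2 = (\nabla_{\omega_2} v_3)^{\top}$, i.e.\ $\alpha\lambda\, \nabla^2_{\omega_3 \omega_2} d_{s,t} = -\alpha\, \nabla^2_{\omega_3 \omega_2} d_{s,t}$, which forces $\lambda=-1$ whenever the cross-Hessian $\nabla^2_{\omega_3 \omega_2} d_{s,t}$ is not identically zero. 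Since $\lambda>0$ and $d_{s,t}$ has a nontrivial joint dependence on the feature extractor and the domain classifier, this contradicts symmetry, so the game is not potential. Equivalently, there is no scalar $\phi$ with $\nabla \phi = v$, as already anticipated by the ``cooperation with competition'' decomposition in \Cref{eqn:cooperation_competition_vector_field}.

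Next, to rule out that the game is necessarily purely adversarial, I would observe that the diagonal block $\nabla^2_{\omega_1} J_1 = \nabla^2_{\omega_1}\ell + \alpha\,\nabla^2_{\omega_1} d_{s,t}$ is symmetric and generically nonzero for standard choices of $\ell$ (e.g.\ cross-entropy with a nontrivial classifier head). A skew-symmetric Hessian has zero diagonal blocks, so a single nonzero diagonal block suffices to exclude the purely adversarial case; and for the spectrum, it is enough to exhibit one stationary point where $-\nabla v(\omega^*)$ has a nonzero real eigenvalue. \Cref{sup.eg.lr.gd} provides precisely such a witness, where $-2 \in \mathrm{Sp}(-\nabla v(\omega^*))$, so the game Hessian there is neither skew-symmetric nor has purely imaginary spectrum.

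Finally, the non-equivalence with gradient flow is an immediate consequence of the first part: gradient flow is by definition $\dot\omega = -\nabla \phi(\omega)$ for some scalar $\phi$, and we just argued that no such $\phi$ exists because $\nabla v$ is asymmetric. I do not expect a real obstacle here; the only mild subtlety is that the argument relies on the cross-Hessian $\nabla^2_{\omega_2 \omega_3} d_{s,t}$ not being identically zero, a nondegeneracy condition that is routinely satisfied by the architectures used in DAL and is consistent with all the concrete instances analyzed in \Cref{sup.original.dann.formulation}.
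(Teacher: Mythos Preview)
Your proposal is correct and follows essentially the same route as the paper's proof: asymmetry of $\nabla v$ from the sign flip in $d_{s,t}$ rules out the potential case, the example in \Cref{sup.original.dann.formulation} witnesses that the game is not necessarily purely adversarial, and the gradient-flow claim is immediate from the absence of a potential. Your version is simply more explicit---you actually compute the $(2,3)$ and $(3,2)$ blocks and derive the contradiction $\lambda=-1$, and you add the diagonal-block observation to exclude skew-symmetry directly---whereas the paper just asserts that the computation yields asymmetry and points to the example.
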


\subsection{Proposed Learning Algorithm}
\begin{algorithm}[H]
\small
   \caption{\textbf{Pseudo-Code of the Proposed Learning Algorithm}}
   \label{alg:example}
\begin{algorithmic}
   \STATE {\bfseries Input:}  source data $D_s$, target data  $D_t$, player losses $J_i$, network weights $\{\omega_i\}_{i=1}^3$, learning rate $\eta$

   \FOR{$t=1$ {\bfseries to} $T-1$}
   \STATE $x_s,y_s \sim D_s $, $x_t \sim D_t $
  
   \STATE$ v=$ [gradient$(x_s,x_t,y_s,J_i)$ for $i$ in  1..3 ].
   \STATE\sout{$\omega =$  $\textrm{Gradient Descent}(v,w,\eta)$}
   \STATE $\omega =$  $\textrm{Runge-KuttaSolver}(v,w,\eta,\textnormal{steps=1})$
  
   \ENDFOR
  
\end{algorithmic}
\end{algorithm}
\subsection{CO approximates RK2 (Heun’s Method)}
\label{suppl_co_approximates_rk2}
 We have the update rule of CO \cite{mescheder2017numerics}:
\begin{equation}
\omega^+=\omega-[\eta v(\omega) + \gamma \frac{1}{2} \nabla||v(\omega) ||_2^2]
\end{equation}
where $\frac{1}{2} \nabla||v(\omega) ||_2^2=\nabla v(w)^\top v(w)$

From RK2 (Heun‘s Method), we have:
\begin{align} 
w^+ &= w - \frac{\eta}{2}(v(w) + {\color{red}{v(w - \eta v(w))}}). \\
&= w - \frac{\eta}{2}(2 v(w) -\eta\nabla v(w)v(w) + O(\eta^2)) \label{eqn:rk2_solver_co_opt}
\end{align}
If $-\nabla v(w)=\nabla v(w)^\top$ $\implies  \nabla v(w)$ skew-symmetric, we can see that~\Cref{eqn:rk2_solver_co_opt} approximates the CO optimization method from \cite{mescheder2017numerics}. This assumption is however not necessarily true in our case which may explain why RK2 was better in experiments. 
Experimentally, for CO, the best transfer performance was obtained for $\gamma=0.0001$.
We also observed that CO is very sensitive to the choice of $\gamma$.

\section{Experimental Setup Additional Details }\label{supp.details.experimental.setup}
Our algorithm is implemented in Jax \cite{jax2018github} (Digits, NLP benchmark) and PyTorch (Visual  Task). Experiments are conducted on a NVIDIA Titan V and V100 GPU Cards.

The digits benchmark constitutes of two digits datasets \textbf{MNIST} and \textbf{USPS} \cite{lecun1998gradient,long2018conditional}
with two transfer tasks (M $\to$ U and U $\to$ M). 
We adopt the splits and evaluation protocol from \citet{long2018conditional} which uses  
60,000 and 7,291 training images and the standard test set of 10,000 and  2,007 test images for MNIST and USPS, respectively. 
We follow the standard implementation details of \citet{long2018conditional} for this benchmark. 
Therefore, we use LeNet \cite{lecun1998gradient} as the backbone architecture,  dropout (0.5), \underline{fix the batch size to 32}, the number of iterations per epoch to 937 and use weight decay (0.005). 
For GD-NM, we use the default momentum value (0.9). We follow the same approach for the additional hyper-parameters of Adam. 

\textbf{Grid Search Details for Comparison vs Standard Optimizers.}
For the learning rate, learning schedule and adaptation coefficient ($\lambda$), we run the following grid search and select the best hyper-parameters for each optimizer.
\begin{itemize}
\item learning rate: $0.0001, 0.001,0.01,0.03, 0.1, 0.3, 0.4$.
\item learning scheduler: none, polynomial.
\item adaptation coefficient $(\lambda)$: $0.1, 0.5, 1$.
\end{itemize}

The grid is run using Ray Tune \cite{liaw2018tune} and on the transfer task M$\to$U. To avoid expending resources in bad runs we use a HyperBandScheduler \cite{li2018massively}. The same hyper-parameters are used for the other task (i.e U $\to$ M).
The best criteria correspond to the transfer accuracy (as it is typically done in UDA).

\textbf{Grid Search Details for Comparison vs Game Optimization Algorithms.}
Similar to the previous experiment, for the comparison vs game optimization algorithms, we run the following grid search in order to select their best parameters for the comparison. In the case of CO, we additionally add $\gamma$ extra-hypermeter to the grid as we experimentally found this optimizer to be sensitive to this parameter.
\begin{itemize}
\item learning rate: $0.0001, 0.001,0.006, 0.01,0.03, 0.1, 0.3, 0.4$.
\item learning scheduler: none, polynomial.
\item Adaptation coefficient $(\lambda)$: $0.1, 0.5, 1$.
\item CO Only. Additional ($\gamma$): $10, 1.0, 0.1, 0.01, 0.001, 0.001, \underline{0.0001}$
\end{itemize}
We show the best results of this grid search in \Cref{supp.tbl.comparison.vs.game.opt.algorithms}. For EG, we observe training diverges for learning rates greater than 0.006.

\textbf{Experiment on Visda 2017.} For the visual tasks our implementation is built on PyTorch(1.5.1) and on top of the public available implementations of  \citet{long2018conditional,zhang19bridging,dalib}. We did not run grid search here as it would be very computationally expensive. For the optimizer parameters, we tune thoroughly GD with Nesterov Momentum following the recommendation and implementation from \cite{long2018conditional,zhang19bridging}.  
For our method, we keep the exact same hyper-parameters of GD with NM, but increase the learning rate.
We do not experiment with RK4 since the performance was similar to RK2 in the other benchmarks (see \Cref{tab:acc_digits,tab:acc_nlp}), and it is more computationally expensive.

\textbf{Experiment on Visda 2017 with SoTA algorithms.} 
For the comparison using SoTA algorithms, we use the well-tuned hyper-parameters from \cite{pmlr-v119-jiang20d_ImplicitAlign} and set the discrepancy measure to be $f$-DAL Pearson as in \cite{fdomainadversarial} since this empirically was shown to be better than both JS and $\gamma$-JS/MDD. We use Implicit Alignment \cite{pmlr-v119-jiang20d_ImplicitAlign} as this method also allows to account for the dissimilarity between the labels marginals. We did not run grid search here as it would be very computationally expensive.  In our case, we only increase the learning rate to 0.2.

\textbf{Experiment on Natural Language.} In this experiment, we use the Amazon product reviews dataset \cite{blitzer2006domain} that contains online reviews of products collected on the Amazon website. We follow the splits and evaluation protocol from \cite{courty2017joint,pmlr-v119-dhouib20b} and choose 4 of its subsets corresponding to different product categories, namely: books (B), dvd (D), electronics (E) and kitchen (K). As in \cite{courty2017joint,pmlr-v119-dhouib20b},  from where we took the baseline result, 
the network architecture is a two layer MLP with sigmoid function. 
In our case, hyper-parameters are kept the same but the learning rate is increased by a factor of 10.  

\REDONE{
\textbf{Implementation on different frameworks.} We use both Jax and Pytorch in our paper for simplicity. Since the proposed method only requires changing some lines of code, it can be easily implemented in different frameworks. Prototyping in Jax small scale experiments such as the one in the Digits datasets is very easy. We then use PyTorch in order to integrate our optimizers with large-scale SoTA UDA methods (their codebase was implemented in PyTorch). This shows the versatility and simplicity of implementation of high-order ODE solvers. We did not see any issue while working with either Jax or Pytorch. Our choice of framework was simply based on modifying existing source code to evaluate our optimizer.
}

\textbf{Remark on error bars.} 
For experimental analysis involving a grid-search, we use the same seed and identically initialize the networks. That is, we give every optimizer the same fair opportunity to win as reported above. 
Note that reporting avg and std from a grid-search significantly increases the computational demand so we did not do this.
That said, we report avg and std for our main experiments using the best parameters found on the grid-search.

\section{Additional Experiments}\label{supp.additional.experiments}

\subsection{Natural Language Processing Tasks.}\label{supp.nlp.task} 
We also evaluate our approach on natural language processing tasks. We use the Amazon product reviews dataset \citep{blitzer2006domain}. 
We follow the splits and evaluation protocol from \citet{courty2017joint,pmlr-v119-dhouib20b} and choose 4 of its subsets corresponding to different product categories, namely: books (B), dvd (D), electronics (E) and kitchen (K). 
In our case, hyper-parameters are the same provided by authors but the learning rate is increased by a factor of 10.   \Cref{tab:acc_nlp} shows noticeable gains by replacing the optimizer with either RK2 or RK4.

\begin{table*}[h]
  \vspace{-3mm}
  \centering
  
    \resizebox{\textwidth}{!}{%
     \addtolength{\tabcolsep}{-2pt}
    \begin{tabular}{cccccccccccccc}
          & B$\to$D  & B$\to$ E  & B$\to$ K  & D$\to$B  & D $\to$E  & D$\to$K  & E $\to$ B  & E $\to$ D  & E $\to$ K  & K $\to$ B  & K $\to$ D  & K $\to$ E  & Avg \\
   \toprule
    DANN \citep{pmlr-v119-dhouib20b}  & 80.6  & 74.7  & 76.7  & 74.7  & 73.8  & 76.5  & 71.8  & 72.6  & 85    & 71.8  & 73    & 84.7  & 76.3 \\
     \midrule
     with RK2 & \textbf{82.3} & 74.3  & 75.7  & \textbf{80.2} & \textbf{79.2} & \textbf{79.6} & 72.3  & 74.2  & 86.5  & 72.6  & 73.6  & \textbf{86.2} & 78.1 \\
     with RK4 & 81.6  & \textbf{75.0} & \textbf{78.1} & 79.4  & 78.3  & 80.0  & \textbf{74.1} & \textbf{74.6} & \textbf{86.7} & \textbf{73.5} & \textbf{74.6} & 86.0  & \textbf{78.5} \\
    \midrule
    \bottomrule
    \end{tabular}%
}
\vspace{-2.5mm}
\caption{Accuracy (\%) on the Amazon Reviews Sentiment Analysis Dataset (NLP)}
  \label{tab:acc_nlp}%
  \vspace{-3.5mm}
\end{table*}%

\subsection{Sensitivity to Sampling Noise}
\begin{table}[htbp]
  \centering
  \caption{Sensitivity to Sampling Noise controlled by the batch size in the Visda Dataset. Resnet-50}
    \begin{tabular}{cccc}
    	\toprule
    \multirow{4}[0]{*}{GD} & Batch Size & Avg   & Std \\
    \toprule
          & 64    & \underline{67.82} & 0.29 \\
          & 128   & 67.50 & 0.11 \\
          & 160   & 67.42 & 0.24 \\
     \midrule
    \multirow{3}[0]{*}{Ours(RK2)} & 64    & 73.20 & 0.36 \\
          & 128   & 73.81 & 0.26 \\
          & 160   & \textbf{74.18} & 0.15 \\
    \midrule
    \bottomrule
    \end{tabular}%
  \label{supp.tab:sampling_noise_table}%
\end{table}%

\subsection{Additional Comparison vs Game Optimization Algorithms}
\begin{table}[htbp]
  \centering
  \caption{Comparison vs Game Optimization Algorithms (best result from the grid search)}
    \begin{tabular}{lrrr}
    \toprule
    Algorithm & \multicolumn{1}{l}{M$\to$ U} &       &  \\
    \midrule
    GD   & 90.0    &       &  \\
    GD-NM   & 93.2    &       &  \\
    \midrule
    EG \cite{korpelevich1976extragradient} & 86.7  &       &  \\
    CO \cite{mescheder2017numerics} & \underline{93.8}  &       &  \\
    \midrule
    \textbf{Ours(RK2)} & 95.3  &       &  \\
    \textbf{Ours(RK4)} & \textbf{95.9} &       &  \\
    \midrule
    \bottomrule
    \end{tabular}%
  \label{supp.tbl.comparison.vs.game.opt.algorithms}%
\end{table}%

In~\Cref{supp.tab:sampling_noise_table}, we show the sensitivity of our method to sampling Noise controlled by the mini-batch size. Specifically, we show results for $64,128,160$ ( with 160 being the bigger size we could fit in GPU memory). We also add results for GD. 
We observed that our method performs slightly better when the batch size was bigger.

\Cref{supp.tbl.comparison.vs.game.opt.algorithms} shows the results of the grid-search for the comparison vs game optimization algorithms. 
For each optimizer, these results correspond to their best (in terms of transfer performance) determined
from  the grid-search explained in \Cref{supp.additional.experiments}. We also add GD for comparison. Our method notably outperforms the baselines.
 
 \REDONE{
\subsection{CO vs Gradient Descent and Extra-Gradient Algorithms}\label{suppl_co_vs_others}

In \Cref{supp.tbl.comparison.vs.game.opt.algorithms} we can observe that RK solvers outperform the  baselines. Interestingly, we can also observe that CO outperforms both EG and GD-NM. In this section, we provide some intuition about why this may be the case in practice and conduct an additional experimental on this direction. 

In the analysis presented in \Cref{sec:learning_algorithm}, we show that a better discretization implies better convergence to a local optimality (assuming $\omega^*$  exists, and that it is a strict local NE). Specifically, under those assumptions, we show that in both GD and EG methods we should put an upper bound to their learning rate (see \Cref{sec:learning_algorithm,suppl_dynamics_exg}). 
In \Cref{suppl_co_approximates_rk2}, on the other hand, we show CO can be interpreted in the limit as an approximation of the RK2 solver. 
Based on this, we believe  in practice CO  may  approximate the continuous dynamics better than GD and EG.
Particularly,  if we have an additional hyperparameter ($\gamma$) to tune thoroughly. 
We believe this might also be the reason of why we found CO to be very sensitive to the $\gamma$ parameter. In \Cref{tab:suppl_co_vs_others}, we compare the performance before and after removing the best performing $\gamma$ from the grid search. We can see for values others than the best $\gamma$, CO does not outperform GD-NM either. 

}
\begin{table}[htbp]
  \centering
  \caption{Performance of CO vs others. CO($\gamma=$1e-3) corresponds to the best result after removing 1e-4 from the grid search.}
    \begin{tabular}{lr}
    {Method} & \multicolumn{1}{l}{ {M $\to $U}} \\
    \toprule
    {GD} &   {90.0} \\
    {GD-NM} & {93.2} \\
    \midrule
    {CO ($\gamma =$ 1e-4)} &  {93.8} \\
    {CO ($\gamma =$ 1e-3)} &   {91.6} \\
    \midrule
    {RK2} &  {\textbf{95.3}} \\
    \bottomrule
    \end{tabular}%
  \label{tab:suppl_co_vs_others}%
\end{table}%

\subsection{Wall-Clock Comparison}\label{suppl:wall_clock_section}

\begin{table}[htbp]
  \centering
  \caption{Average Time Per Iteration Comparison (Wall-Clock Comparison)}
    \begin{tabular}{ll}
    {Algorithm} & {Avg time per iteration} \\
    \toprule
    {GD-NM} & {0.26 $\pm$ 0.003} \\
    {Ours(RK2)} & 0.49 $\pm$ 0.008 \\
    \bottomrule
    \end{tabular}%
  \label{tab:suppl_wall_clock_comparison}%
\end{table}%

\REDONE{
In \Cref{tab:suppl_wall_clock_comparison},  we show experimental results of a wall-clock comparison between RK2 and GD-NM.
Specifically, we compute the average over 100 runs on a NVIDIA GPU: TITAN V, CudaVersion 10.1  and PyTorch version: 1.5.1.
As we can see, the average increase in time is less than 2x slower in wall-clock time.This is in line with the claims made in our submission. We additionally emphasize that this time can be improved with more efficient implementations (e.g. Cuda). Moreover, many more efficient high-order ODE solvers exist and could be used in the future. Our work also inspires future research in this direction.
}

\section{PyTorch PseudoCode of RK2 Solver}
Below, we show how to implement the RK2 solver in PyTorch by simply modifying the existing implementation of SGD.

\inputminted{python}{pseudo/RK2Solver.py}

\FloatBarrier

\end{document}